\newcommand{\tnm}[1]{{\textbf{\texttt{#1}}}} % Name of module/tool
\newtheorem{theorem}{Theorem}
\newtheorem{problem}{Problem}
\newtheorem{solution}{Solution}
\newtheorem{assumption}{Assumption}
\newtheorem{remark}{Remark}
\newcommand{\reals}{\mathbb{R}}
\newcommand{\integers}{\mathbb{Z}}
\newcommand{\argmin}[1]{\underset{#1}{\operatorname{arg}\,\operatorname{min}}\;}
\newcommand{\visavis}{\textit{vis-\`a-vis}}
\newcommand{\perVec}{\mathbf{s}}
\newcommand{\perVecChoice}[1]{\hat{\mathbf{s}}_{#1}}
\newcommand{\inpVec}{\mathbf{u}}
\newcommand{\inpVecOpt}{\mathbf{u}^*}
\newcommand{\inpVecChoice}[1]{\hat{\mathbf{u}}_{#1}}
\newcommand{\choiceVec}{\mathbf{c}}
\newcommand{\binChoiceVec}{\mathbf{b}}
\newcommand{\per}[1]{\hat{s}_{#1}}
\newcommand{\loss}[1]{\mathcal{L}(#1)}
\newcommand{\perModel}[1]{g_{\text{per}}(#1)}
\newcommand{\perModelCost}[1]{cost_{\text{per}}(#1)}
\newcommand{\controlCost}[1]{\mathcal{J}^{\text{ctrl}}(#1)}
\newcommand{\perceptionCost}[1]{\mathcal{J}^{\text{per}}(#1)}
\newcommand{\totalCost}[1]{\mathcal{J}^{\text{tot}}(#1)}
\newcommand{\expectation}[1]{\mathbb{E}\left[#1\right]}
\newcommand{\variance}[1]{Var\left(#1\right)}
\newcommand{\perModelExp}[1]{g_{\text{exp}}(#1)}
\newcommand{\perModelVar}[1]{g_{\text{var}}(#1)}
\newcommand{\perModelExpCont}[1]{g^{c}_{\text{exp}}(#1)}
\newcommand{\perModelVarCont}[1]{g^{c}_{\text{var}}(#1)}
\newcommand{\perModelCostCont}[1]{cost^c_{\text{per}}(#1)}
\title{\LARGE \bf Dynamic Selection of Perception Models for Robotic Control}
\author{\IEEEauthorblockN{Bineet Ghosh\IEEEauthorrefmark{1},
Masaad Khan\IEEEauthorrefmark{2}, Adithya Ashok\IEEEauthorrefmark{2},
Sandeep Chinchali\IEEEauthorrefmark{2},
Parasara Sridhar Duggirala\IEEEauthorrefmark{1}\\
}
\IEEEauthorblockA{\IEEEauthorrefmark{1}The University of North Carolina at Chapel Hill, USA \\
\IEEEauthorrefmark{2}The University of Texas at Austin, USA \\
Email: \texttt{\{bineet,psd\}@cs.unc.edu, \{mak4668,adithyashok,sandeepc\}@utexas.edu }
}
}
\begin{document}
\maketitle

\begin{abstract}
Robotic perception models, such as Deep Neural Networks (DNNs), are becoming more computationally intensive and there are several models being trained with accuracy and latency trade-offs. 
However, modern latency accuracy trade-offs largely report mean accuracy for single-step vision tasks, but there is little work showing which model to invoke for multi-step control tasks in robotics. 
The key challenge in a multi-step decision making is to make use of the right models at right times to accomplish the given task.
That is, the accomplishment of the task with a minimum control cost and minimum perception time is a desideratum; this is known as the model selection problem.
In this work, we precisely address this problem of invoking the correct sequence of perception models for multi-step control.
In other words, we provide a provably optimal solution to the model selection problem by casting it as a multi-objective optimization problem balancing the control cost and perception time.
The key insight obtained from our solution is how the variance of the perception models matters (not just the mean accuracy) for multi-step decision making, and to show how to use diverse perception models as a primitive for energy-efficient robotics. 
Further, we demonstrate our approach on a photo-realistic drone landing simulation using visual navigation in AirSim. Using our proposed policy, we achieved 38.04\% lower control cost with 79.1\% less perception time than other competing benchmarks.
% Modern day robotic systems have access to a plethora of models--with varying compute cost and accuracy, such as EfficientNets and ResNets--to accomplish a desired control task with minimum control cost. The key challenge in such a setting is to make use of the right models at right times such that the task is accomplished, with a minimum control cost, in minimum compute time---this is known as the \emph{model selection problem}. In this paper, we provide a provably optimal solution to the model selection problem by casting it as an optimization problem.
% %
% Modern latency accuracy trade-offs largely look at single step vision tasks and report mean accuracy. However, in this paper, we show that how variance matters for multi-step decision making and show how to use diverse models as a primitive for energy-efficient robotics.
% %
% Further, we demonstrate our approach on an aircraft landing simulation. Using our policy, we were able to land the aircraft in reasonable amount of time, without causing driver discomfort. Other policies, on the other hand, either causes a rapid, uncomfortable landing, or takes unreasonably long time to land the aircraft. 
\end{abstract}

\IEEEpeerreviewmaketitle

%%%%%%%%%%%%%%%%%%%%%%%%%%%%%%%%%%%%%%%%%%%%%%%%%%%%%%%%%%%%%%%%
\section{Introduction}
\label{sec:introduction}
%\Bineet{Comments.}

%\Sandeep{Comments.}

%\Sridhar{Comments.}

% A robot, required to perform a desired control task, is generally equipped with a low accuracy and computationally fast model (local model). However, in modern day distributed robotic setting, in addition to the local model, the robot has access to a plethora of models with varying cost and accuracy, such as EfficientNets~\cite{effnet} and ResNets~\cite{resnets}. In such a setting---a robot equipped with a plethora of models (\textit{i.e.} can make invocation) with varying cost and accuracy---the key goal of the robot is to accomplish the task with a required control cost in minimum compute time. In other words, the goal is to achieve optimality---a perfect balance between compute time and control cost. 
A resource constrained robot, required to perform a desired control task, is generally equipped with a low accuracy and computationally fast perception model (local model). 
However, today different perception models with accuracy and latency trade-offs are being trained.
For example, the EfficientNet~\cite{effnet} suite of vision models provides 8 model variants, namely EfficientNet 0 to EfficientNet 7, that trade-off accuracy with model size and latency (ResNet~\cite{resnets} suite is yet another such example).
These models increase inference latency and energy consumption, requiring a robot to gracefully balance control cost with perception cost, as shown in \cref{fig:effnetTrained}.
The key desideratum for the robot in a multi-step control task is to leverage the available models to perform the task efficiently. 
In other words, the robot should accomplish the task by optimally balancing the perception time and control cost. 
In the rest of the paper, we refer to perception time as perception cost since it can represent various metrics (such as number of bit-operations performed, time taken to run on a device, \textit{etc}.).

\begin{figure}[t]
\vskip 0.22in
\begin{center}
{\includegraphics[width=1.0\columnwidth]{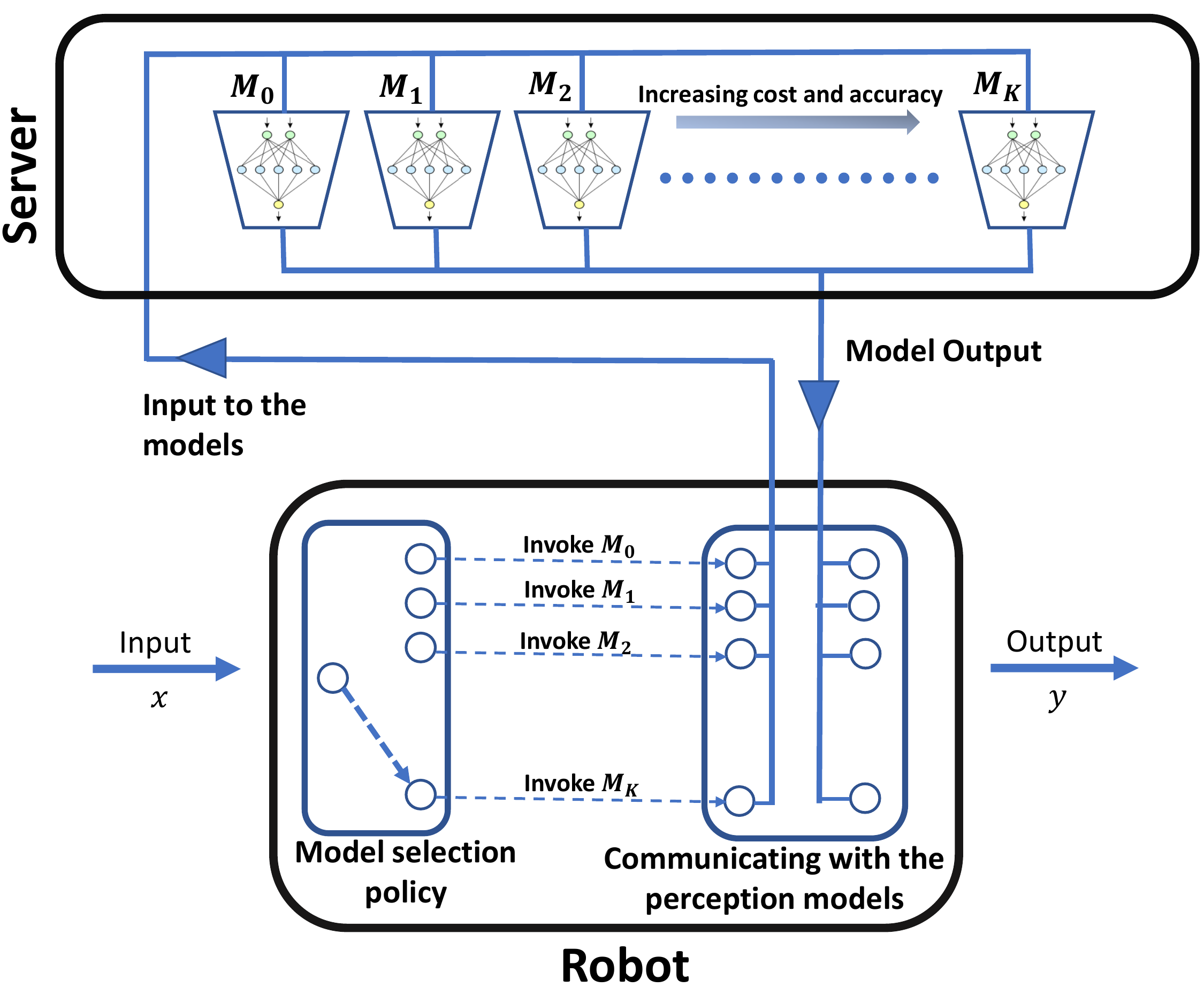}}
\end{center}
%\vskip -0.2in
\caption{\textbf{Model Selection Problem: }
In a setting where a robot has access to a plethora of models (namely $M_0$ to $M_K$)---with varying cost and accuracy---the goal of the robot is to perform a control task such that it strikes an optimal balance between compute cost and control cost.}
\label{fig:one}
%\vskip -0.25in
\end{figure}

With recent advancements in model training (with accuracy and latency trade-off), a model selection strategy that can accomplish a task by optimally balancing the perception cost and control cost is a necessity in model day robotic applications.
Imagine a scenario where a factory robot is required to perform safe navigation in a busy warehouse floor. 
%The robot is equipped with a plethora of perception models that trades-off control cost and perception cost, to perform a safe navigation. 
To perform the navigation, the robot is equipped with a plethora of perception models that trades-off control cost and accuracy.
Without loss of generality, such perception models can reside on the robot, or in a central server (``Cloud") which can be invoked over a shared network. In such scenarios, a desired strategy should be to use the high perception cost models, to ensure minimum control cost, \emph{only} when the robot is required to make tricky maneuvers. 
The solution we propose in this paper provably realizes such a policy. Consider a solution using the computationally fastest perception model among all the available models (say EfficientNet 0) at all time steps: this will lead to a solution that takes minimum possible perception cost but results in worst possible control cost (as decisions based on a low-accuracy model might lead to states far away from the optimal states \visavis{} control cost). Similarly, a solution using the most computationally expensive perception model (say EfficientNet 7) can result in a least control cost solution but maximum perception cost. Clearly, for any practical scenario, such simple solutions do not work. A desired solution, on the other hand, uses computationally expensive perception models only when computationally faster models fail to produce the desired result. Adding to the challenges, the relationship, with respect to compute cost and accuracy, between the available plethora of models is highly nonlinear~\cite{effnet}. We refer to this as the \emph{model selection problem}, illustrated in \cref{fig:one}.

\begin{figure}[t]
\vskip 0.22in
\begin{center}
{\includegraphics[width=1.0\columnwidth]{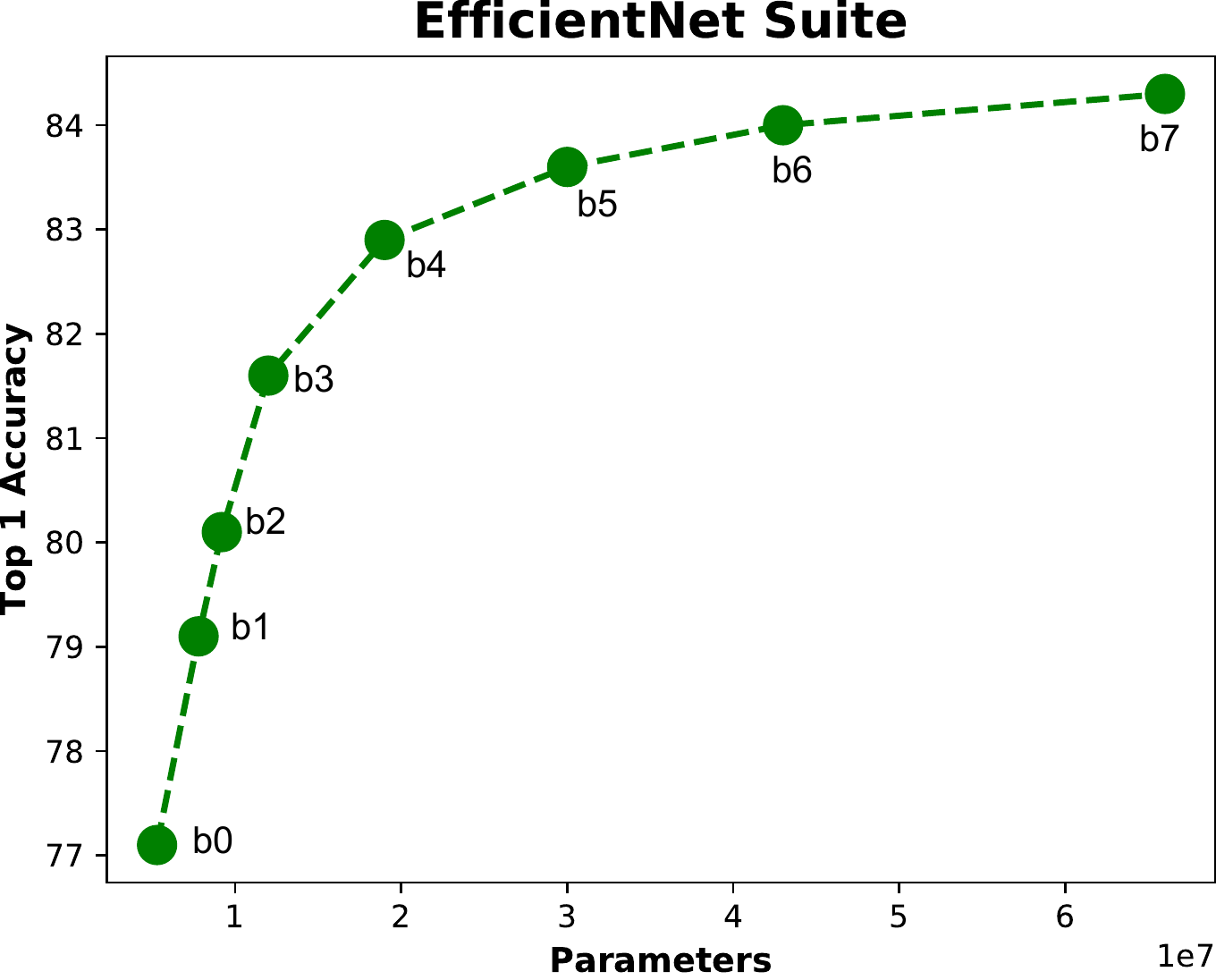}}
\end{center}
%\vskip -0.2in
\caption{\textbf{EfficientNet Suite.} (The data is taken from~\cite{effnet}).
What model should we dynamically invoke based on a robot's real-time operating context?}
\label{fig:effnetTrained}
%\vskip -0.25in
\end{figure}

% An earlier work has provided an optimal solution to this problem, with two models, and only when certain assumptions hold true  \cite{ghoshiros21}. In this work,  we provide a provably optimal solution to the model selection problem with a plethora of models (any number of models) with varying cost and accuracy, even when the prior assumptions does not hold true. In particular, we compute a sequence of models that should be invoked, at each time step, to achieve optimality with respect to compute time and control cost. The key insight behind our solution is: the model selection sequence is computed by formulating it as an optimization problem (MIQP) that minimizes a linear combination of compute cost and control cost, leveraging results from~\cite{sandeep_neurips}. Further, we prove a polynomial time complexity for our proposed solution for special subcases. We also note that, though we motivate the paper with the goal of minimizing compute time and maximizing accuracy, but our proposed framework can be used with other metrics as well---for instance, one can use safety in place of control cost, or network bandwidth instead of compute time. 
% %
% Our solution also indicates the importance of variance (not just mean accuracy) of the models while considering a multi-step decision problem. Whereas the model latency accuracy trade-offs largely look at the single step vision tasks and report mean accuracy. In particular, our works shows that the multi-step solution depends linearly on variance as well.

Most optimal control only penalizes control cost and tracking error, while remaining oblivious to the perception cost.
The key idea behind our solution is to compute the sensitivity of control cost to perception errors, which in turn guides which model to invoke since we not only care about conventional control cost but also perception latency.
In particular, we provide a provably optimal solution to the model selection problem with a plethora of perception models (any number of models) with varying control cost and perception cost.
That is, we compute a sequence of models that should be invoked, at each time step, to achieve optimality with respect to control cost and perception cost. 
This is achieved by encoding the model selection sequence as a Mixed Integer Quadratic Program (MIQP) that minimizes a linear combination of the control cost and the perception cost, leveraging results from~\cite{sandeep_neurips}. Intuitively, the integer variables, in the MIQP formulation, model the discrete available choices of the perception models, and the objective function is quadratic primarily due to the control cost being a quadratic function of the robot states. 
Further, we prove a polynomial time complexity for our proposed solution for special subcases. 
%
%We also note that, though we motivate the paper with the goal of minimizing compute time and maximizing accuracy, but our proposed framework can be used with other metrics as well. For instance, one can use safety in place of control cost, or network bandwidth instead of compute time.
%
Our solution also indicates the importance of variance (not just mean accuracy) of the perception models when considering a multi-step decision problem; where in contrast, most works on model latency accuracy trade-offs largely look at the single step vision tasks and report mean accuracy.

% We demonstrate our solution on an aircraft landing sequence simulated on AirSim~\cite{airsim}. In our experiments, an aircraft is equipped with all the EfficientNet models to compute the altitude of the aircraft. The aircraft landing controller issues input to the elevator based on the altitude computed by the given EfficientNet model (where the model, at a given time step, is selected as per the policy). Our experiments have shown the following interesting insights:
% \begin{enumerate}
%     \item Using EfficientNet 0, at all time steps, results in a quick landing of the aircraft, but a jerky and uncomfortable landing. Clearly this is an unacceptable policy.
%     \item Using EfficientNet 7, at all time steps, results in the most comfortable landing, but takes a huge amount of time to land the aircraft. This strategy is not feasible in practice---an aircraft cannot be allowed to take a very long time to make the landing.
%     \item Whereas our model selection strategy---that uses all the available models efficiently---results in a comfortable landing in a reasonable amount of time.
% \end{enumerate}

%We demonstrate our solution on an photo-realistic drone landing sequence, using visual navigation, simulated in AirSim~\cite{airsim}. 
We demonstrate our solution on a drone landing sequence, using visual navigation, simulated on a photo-realistic simulator named AirSim~\cite{airsim}.
In our experiments, the drone is equipped with all the EfficientNet models to compute the altitude of the drone from the helipad. 
These models have associated perception errors (in estimating the altitude) due to uncertainty in DNNs.
The drone landing controller issues input based on the altitude computed by the given EfficientNet model (where the model, at a given time step, is selected as per the policy). Our experiments have shown that we can achieve 38.04\% lower control cost in 79.1\% less perception time than other available policies. For the rest of the paper, we refer to perception models simply as models.

\textit{Related Work}: The closest works to our work are \cite{chinchali2019network,ghoshiros21}. In \cite{chinchali2019network}, given two models, with low inference cost (fast) and high inference cost but higher accuracy (slow), it computes a model selection sequence using a Reinforcement Learning (RL) based approach. However, \cite{chinchali2019network} does not guarantee an optimal solution. In \cite{ghoshiros21}, when the input to the models are independent and identically distributed, using statistical correlation between the two models (fast and slow), it provides a provably optimal model selection sequence. 
The key difference of our work with \cite{ghoshiros21} is that their model selection approach is primarily focused on inference whereas our work focuses on control tasks.
The approach mentioned in \cite{ghoshiros21} primarily depends on model compression based techniques \cite{mit_pac_1,mit_pac_2} to compute the statistical correlation between the models. Further, our work can also be used in scenarios where a robot must selectively ask a human teacher during active learning \cite{cakmak2012designing,whitney2017reducing} or for manipulation \cite{kaipa2016enhancing}, when the mean accuracy and variance of the human teacher is known.
While not directly studying the model selection problem, \cite{nakanoya2021co} proposes an algorithm to compress sensory data that is relevant to the perception model's objective.

\textit{Contributions}: Given prior works, mainly \cite{chinchali2019network, ghoshiros21}, our contributions are as follows:
\begin{enumerate}
    \item Prior works have only considered the model selection problem with only two models. \cite{chinchali2019network} uses an RL based approach, and \cite{ghoshiros21} provides an interpretable solution when some restrictive assumptions hold true. In contrast, we provide a provable optimal solution to the model selection problem, with any number of perception models, for multistep decision making to achieve a desired a control task.
    \item \cite{chinchali2019network} uses RL based approach, and \cite{ghoshiros21} uses statistical correlation between the models primarily using model compression. In contrast, our solution is fundamentally different from both these approaches. Leveraging recent results from \cite{sandeep_neurips}, we cast the model selection problem as an optimization problem (MIQP), and provide a provable optimal solution.
    \item Our solution indicates the importance of variance, not just mean accuracy of the models. Further, our work also allows the user to use a plethora of compute models, such as EfficientNets \cite{effnet} and ResNets \cite{resnets}, to achieve a goal in an optimal fashion.
    \item We show that a special sub-case of the model selection problem has polynomial time complexity.
    \item We illustrate our approach on a photo-realistic drone landing sequence simulated in AirSim. The results of our experiments clearly show how our approach outperforms other policies.
\end{enumerate}

\textit{Organization}: The paper is organized as follows. In \cref{sec:probStatement}, we define the problem of model selection formally. In \cref{sec:all_aware_solution_model_selection}, we provide an optimal solution when the output of all the models, at all time steps, is known. In \cref{sec:exp_solution_model_selection}, we provide an optimal solution when the expected accuracy and variance of the models is known. \cref{sec:polytime_subcase} shows a special sub-case of the model selection problem that can be solved in polynomial time. Finally in \cref{sec:exp}, we illustrate our approach on a drone landing sequence simulated in AirSim.

% \section{Notations}
% \label{sec:notations}
% %\input{notations}

\section{Problem Statement}
\label{sec:probStatement}
In this section, we define the problem statement formally.

Consider the following dynamics:
\begin{equation}
\label{eq:dynamics}
    x_{t+1}=A x_t + Bu_t + Cs_t,
\end{equation}
where, at time step $t$, $x_t$ denotes the state reached by the robot, $u_t$ denotes the controller input to the robot, and $s_t$ denotes the \textit{perception input}. We note that, in the dynamics given in \cref{eq:dynamics}, the state of the robot $x_t$ is therefore dependent on the perception inputs to the robot at previous time steps.
Though a linear dynamics, as in \cref{eq:dynamics}, is restrictive, but it provides a good first step especially with linearized robot dynamics for common robots. Further, the additive $C s_t$ term can seem restrictive but indeed practical, especially while encoding exogenous sensor measurements. Imagine a drone landing sequence that relies on a DNN based perception model to compute the altitude of the drone which is used by the controller to issue control inputs to the drone. The DNN estimates the altitude of the drone with some error due to the inherent uncertainties of the model. In such cases, the $C s_t$ term is used to model the measurement errors from the DNN. 

\textit{\underline{Robot State}}: Given the dynamics as in \cref{eq:dynamics}, the set of states reached by the robot from an initial set $x_0 \in \mathbb{R}^n$ at time step $t$ is given by $x_t \in \mathbb{R}^n$, where $A \in \mathbb{R}^{n \times n}$.

\textit{\underline{Perception Input}}: The perception input to the robot at time step $t$ is given by $s_t \in \mathbb{R}^p$, where $C \in \mathbb{R}^{n \times p}$. Let $\mathbf{s} \in \mathbb{R}^{pH \times 1}$ denote the vector of all perception inputs up-to time step $H-1$; that is:
\begin{equation*}
    \perVec =
    \begin{bmatrix}
    s_0 & s_1 & \dots & s_{H-1}
    \end{bmatrix}^\top.
\end{equation*}

\textit{\underline{Controller Input}}: Given a perception input vector $\perVec$, the controller input to the robot at time step $t$ is given by $u_t=\pi\big(x_t,s_{t:H-1},\theta_c\big)$, where $\pi$ is the policy providing the control inputs, with the given parameter $\theta_c$ (such as dynamics matrix, feedback matrix \textit{etc.})~\cite{sandeep_neurips}.  Let $\inpVec=\mu(\perVec)$ denote the vector of all control inputs, for a given perception input vector $\mathbf{s}$, up-to time step $H-1$; that is:
\begin{equation*}
    \inpVec = \mu(\perVec) =
    \begin{bmatrix}
    u_0 & u_1 & \dots & u_{H-1}
    \end{bmatrix}^\top,
\end{equation*}

where $u_t=\pi\big(x_t,s_{t:H-1},\theta_c\big)$. We note that the control input $u_t$ is dependent on the current state ($x_t$) and the future perception inputs ($s_{t:H-1}=\{s_t, s_{t+1}, \cdots, s_{H-1}\}$).
To be able to compute the control inputs that would make the system stable, it needs the information about what it sees as perception input in the future as well. The closed form expression for this can be found in~\cite{sandeep_neurips}.

\subsection{Perception Model}
\label{subsec:perception_model}

The true perception input to the robot, at time step $t$, is given as $s_t$. But obtaining the true perception input $s_t$ is practically impossible, therefore we use practically implementable perception models to estimate $s_t$ as $\per{t}$. The loss between the true perception input $s_t$ and the estimated perception input is given by $\loss{s_t,\per{t}}$.

The perception input, at time step $t$, to the robot comes from the perception model $g_{\text{per}}: \integers_{[0,W-1]} \times \integers_{\ge 0} \rightarrow \mathbb{R}^{p}$, where $W$ is the total number of perception models available. 
Intuitively, the perception model $\perModel{w,t}$ selects one of the available models $0 \le w \le W-1$, and returns its estimated perception at time step $t$.
The cost incurred to the robot for invoking the perception model is given by $cost_{\text{per}}: \integers_{[0,W-1]} \rightarrow \mathbb{R}$. We note that our cost is dependent only on the model selected, and not on time step. That is, the perception input at time step $t$ is given by $\per{t}=\perModel{w,t}$ for a given \textit{choice of the model} $w$, and the cost for invoking the model is given by $\perModelCost{w}$. Intuitively, the model choice $w$ represents the desired quality of output by the user. For example, higher value $w$ incurs a higher cost \big(\textit{i.e.} higher value of $\perModelCost{w}$\big) and better quality of perception input \big(\textit{i.e.} lower value of $\loss{\perModel{w,t},s_t}$\big).

We note that $\per{t}$ is used to represent the perception input at time step $t$; where $\per{t}=\perModel{w,t}$ means the perception input is chosen from the $w^{\text{th}}$ perception model as per the function $\perModel{w,t}$. However, the function $\perModel{\cdot}$, for all $t$ and all $w \in \integers_{[0,W-1]}$, is defined independent of $s_t$ as follows:

\begin{equation}
    \label{eq:perceptionModel}
    \perModel{w,t}= \Gamma^w_t,
\end{equation}
\begin{equation}
    \label{eq:perceptionCost}
    \perModelCost{w}=w \cdot \Upsilon,
\end{equation}

where $w \in \integers_{[0,W-1]}$ represents the model choice variable, $\Upsilon \in \reals_{>0}$, and $\Gamma^w_t \in \mathbb{R}^p$. Intuitively, $\Gamma^0_t$ represents the lowest-quality perception input \big(\textit{i.e.} highest value of $\loss{\Gamma^0_t,s_t}$\big) with minimum perception cost 0 (at time step $t$), and $\Gamma^{W-1}_t$ represents the highest-quality perception input with maximum perception cost $\perModelCost{W-1}=(W-1)\cdot \Upsilon$ (at time step $t$). 

Let $\mathbf{C}$ be the set of all possible choices. Let $\choiceVec=\big\{w_0, w_1, \cdots, w_{H-1}\big\} \in \mathbf{C}$ be a set of choices up-to time step $H-1$. 
Intuitively, $\mathbf{C}$ represents the decision variables in the optimization problem encoding the choice of the models at every time step. For instance, an optimal solution $\choiceVec_{\text{opt}} \in \mathbf{C}$ to the optimization problem with $w_t=M$ implies model $M$ (where $0 \le M \le W-1$) should be invoked at time step $t$.
Let the perception input vector, corresponding to $\choiceVec$, be given as follows:
\begin{equation*}
    \perVecChoice{\choiceVec} =
    \begin{bmatrix}
    s^c_0 & s^c_1 & \dots & s^c_{H-1}
    \end{bmatrix}^\top,
\end{equation*}
where $s^c_t=\perModel{w_t,t}$, and the corresponding control vector is given as $\inpVecChoice{\choiceVec}=\mu(\mathbf{s}_{\mathbf{c}})$. The optimal control vector $\inpVecOpt=\mu(\perVec)$.

\subsection{Perception Cost and Control Cost}
\label{subsec:perception_control_costs}

Given a set of choices $\choiceVec=\big\{w_0, w_1, \cdots, w_{H-1}\big\}$, and an initial set $x_0$, the control cost $\mathcal{J}^{ctrl}(\mathbf{c})$ is given as \cite{sandeep_neurips}:

\begin{align}
\label{eq:jc}
    \controlCost{\choiceVec}=J^c(\inpVecChoice{\choiceVec},\perVec,x_0) = \nonumber \\
    c_{H-1}(x_{H-1}) + \sum\limits_{t=0}^{H-1} c(x_t, \hat{u}_t),
\end{align}

where $c(\cdot)$ is the stage cost and $c_{H-1}(x_{H-1})$ is the terminal cost. Further, we define the perception cost $\mathcal{J}^{per}(\mathbf{c})$ as follows:

\begin{equation}
\label{eq:jp}
    \perceptionCost{\choiceVec}=\sum\limits_{t=0}^{H-1} \perModelCost{w_t}.
\end{equation}

Given $\alpha, \beta \in \mathbb{R}$, the total cost $\mathcal{J}^{\text{tot}}$ gracefully trades-off the control cost and the perception cost by taking a weighted sum of the two, with parameters $\alpha$ (denoting the weight on the control cost) and $\beta$ (denoting the weight on the perception cost). Formally:
\begin{equation}
\label{eq:jtot}
   \totalCost{\choiceVec} =\alpha \cdot \controlCost{\choiceVec} + \beta \cdot \perceptionCost{\choiceVec}.
\end{equation}

\subsection{Model Selection Problem}
\label{subsec:model_selection_problem}
Using the aforementioned artifacts, we now formally state the model selection problem as follows:
\begin{problem}[Model Selection Problem]
\label{prob:model_selection_naive}
Given a system with dynamics as in \cref{eq:dynamics}, and perception model $\perModel{\cdot}$ with cost $\perModelCost{\cdot}$, compute a set of choices $\choiceVec_{opt}$ for $H$ time steps, such that, for a given $\alpha,\beta$ and $x_0 \in \mathbb{R}^n$:
\begin{equation}
    \choiceVec_{\text{opt}}= \argmin{\choiceVec \in \mathbf{C}} \Big\{ \totalCost{\choiceVec} \Big\}.
\end{equation}
\end{problem}

Intuitively, we want to find a set of choices, $\choiceVec_{\text{opt}}$, such that the total cost $\mathcal{J}^{\text{tot}}$ (an weighted sum of the control cost and the perception cost) is minimized.

\subsubsection{Reformulating \cref{prob:model_selection_naive}}
In this section we reformulate \cref{prob:model_selection_naive} using results from \cite{sandeep_neurips}. The reason to reformulate \cref{prob:model_selection_naive} is three-fold: \begin{enumerate*}
    \item In \cref{prob:model_selection_naive}, the solution is tied to an initial set $x_0$. In this reformulation, we propose a formulation that is independent of the initial state.
    \item Leveraging prior results, we will propose a solution that can be formulated as an optimization problem. 
    %Therefore, it can be solved very efficiently using off the shelf tools.
    Such a formulation helps us in computing the optimal model selection sequence efficiently using the off the self optimization solvers.
    \item We will further show that for special cases we can solve the problem in polynomial time complexity.
\end{enumerate*} 

A re-formulation to solve \cref{prob:model_selection} can be given as follows:
\begin{align}
&~&
\begin{aligned}
\min_{\choiceVec \in \mathbf{C}} \quad \totalCost{\choiceVec}
\end{aligned} \label{opt:naive}\\
&\iff&
\begin{aligned}
\min_{\choiceVec \in \mathbf{C}} \quad \alpha \cdot \controlCost{\choiceVec} + \beta \cdot \perceptionCost{\choiceVec}
\end{aligned} \label{opt:naive2} \\
&\iff&
\begin{aligned}
\min_{\choiceVec \in \mathbf{C}} \quad \underbrace{\alpha \cdot \big(J^c(\inpVecChoice{\choiceVec},\perVec,x_0)\big)}_{\text{control term}} + \underbrace{\beta \cdot \perceptionCost{\choiceVec}}_{\text{perception cost term}}
\end{aligned} \label{opt:naive3}
\end{align}

The control term of the optimization (in \cref{opt:naive3}) is minimized with control inputs computed from the true perception input $\perVec$. That is, the control cost increases when estimated perception inputs $\perVecChoice{}$ is used instead of true perception input $\perVec$. Formally, for any set of choices $\choiceVec \in \mathbf{C}$, the following relation is assumed to hold:
\begin{align}
    J^c\big(\inpVecOpt,\perVec,x_0\big) < J^c(\inpVecChoice{\choiceVec},\perVec,x_0). \label{eq:relCtrlCost}
\end{align}

Therefore, the re-formulation given in \cref{opt:naive2} can be further rewritten as: 
\begin{equation}
\label{opt:model_selection}
\begin{aligned}
\min_{\choiceVec \in \mathbf{C}} \quad \alpha \cdot  \Big(J^c(\inpVecChoice{\choiceVec},\perVec,x_0) - J^c\big(\inpVecOpt,\perVec,x_0\big) \Big) + \beta \cdot \perceptionCost{\choiceVec}
\end{aligned}
\end{equation}

Given an arbitrary perception input vector $\hat{\mathbf{s}}$, \cite{sandeep_neurips} provides a closed form solution to $J^c\big(\mu(\hat{\mathbf{s}}),\perVec,x_0\big)$ $-$ $J^c\big(\inpVecOpt,\perVec,x_0\big)$. More specially, \cite{sandeep_neurips} provides a closed form expression for $J^c\big(\mu(\hat{\mathbf{s}}),\perVec,x_0\big) - J^c\big(\inpVecOpt,\perVec,x_0\big)$  in terms of $\hat{\mathbf{s}}$ and $\perVec$ (as a quadratic function of the difference between $\hat{\mathbf{s}}$ and $\perVec$, \textit{i.e.}, a quadratic function of $\hat{\mathbf{s}}-\perVec$):
\begin{equation}
    \label{eq:control_cost_diff}
    J^c\big(\mu(\hat{\mathbf{s}}),\perVec,x_0\big) - J^c\big(\inpVecOpt,\perVec,x_0\big) = (\hat{\mathbf{s}} - \perVec)^{\top} \times \Psi \times (\hat{\mathbf{s}} - \perVec),
\end{equation}
where $\Psi=L^{\top}K^{-1}L$ is a positive semi-definite matrix. \cite{sandeep_neurips} provides closed form solutions to compute the matrices $L$,$K$, where $L$ and $K$ are only dependent on the dynamics (matrices $A$, $B$ and $C$) and cost matrices $Q$ and $R$. We note that the above term (\cref{eq:control_cost_diff}) is independent of $x_0$, using this we propose a reformulation of \cref{prob:model_selection} that is independent of the initial set $x_0$ in the rest this section.

Given a set of choices, $\choiceVec \in \mathcal{C}$, recall that the perception input vector corresponding to $\choiceVec$ is $\perVecChoice{\choiceVec}$. Using \cref{eq:control_cost_diff}, the optimization formulation in \cref{opt:model_selection} can be equivalently rewritten as:

\begin{problem}[Model Selection Problem]
\label{prob:model_selection}
Given a system with dynamics as in \cref{eq:dynamics}, and perception model $\perModel{\cdot}$ with cost $\perModelCost{\cdot}$, compute a set of choices $\choiceVec_{opt}$ for $H$ time steps, such that, for a given $\alpha,\beta$:
\begin{equation}
\label{opt:model_selection_main}
\begin{aligned}
\min_{\choiceVec \in \mathbf{C}} \quad \alpha \cdot  \Big((\perVecChoice{\choiceVec} - \perVec)^{\top} \times \Psi \times (\perVecChoice{\choiceVec} - \perVec) \Big) + \beta \cdot \perceptionCost{\choiceVec}.
\end{aligned}
\end{equation}
\end{problem}

\begin{theorem}[Equivalence]
\label{thm:equivalence}
\cref{prob:model_selection_naive} and \cref{prob:model_selection} are equivalent.
\end{theorem}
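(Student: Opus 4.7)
The plan is to show the equivalence by tracking the chain of reformulations already set up in the paragraphs leading to Problem~2 and verifying that each step either preserves the objective exactly or subtracts a quantity that does not depend on the decision variable $\choiceVec$ (so that the $\argmin$ is unchanged).

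First I would start from Problem~1's objective $\totalCost{\choiceVec} = \alpha \cdot \controlCost{\choiceVec} + \beta \cdot \perceptionCost{\choiceVec}$ and unfold $\controlCost{\choiceVec} = J^c(\inpVecChoice{\choiceVec},\perVec,x_0)$ using \cref{eq:jc}, recovering \cref{opt:naive3}. The key observation is that the term $J^c(\inpVecOpt,\perVec,x_0)$ is fully determined by the true perception input $\perVec$, the initial state $x_0$, and the control parameters $\theta_c$, and therefore does not depend on the choice vector $\choiceVec$. Consequently, subtracting $\alpha \cdot J^c(\inpVecOpt,\perVec,x_0)$ from the objective is a constant shift that leaves the minimizer invariant; this yields the equivalent optimization in \cref{opt:model_selection}. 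The inequality \cref{eq:relCtrlCost} is worth citing here to justify that the subtracted term is in fact the infimum of the control cost across perception inputs, but strictly speaking the argmin-preservation argument goes through independently of that inequality.

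Next I would apply the closed-form identity \cref{eq:control_cost_diff} from \cite{sandeep_neurips} with $\hat{\mathbf{s}} := \perVecChoice{\choiceVec}$, which converts the difference of control costs inside the objective into the quadratic form $(\perVecChoice{\choiceVec}-\perVec)^{\top}\Psi(\perVecChoice{\choiceVec}-\perVec)$, where $\Psi = L^{\top}K^{-1}L$ depends only on $A, B, C, Q, R$. Substituting this expression into \cref{opt:model_selection} recovers exactly the objective of \cref{opt:model_selection_main}, so Problem~2's minimization has the same feasible set $\mathbf{C}$ and the same objective (up to a $\choiceVec$-independent constant) as Problem~1's. Hence the two problems share the same optimal choice vector $\choiceVec_{\text{opt}}$, establishing equivalence.

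I do not anticipate a real obstacle in the proof: the argument is a short chain of rewrites, and the only non-trivial ingredient, the closed form in \cref{eq:control_cost_diff}, is imported as a black box from \cite{sandeep_neurips}. The most delicate point to state carefully is the meaning of ``equivalent'' — here I would make explicit that equivalence means identical $\argmin$ over $\mathbf{C}$, not identical optimal value, since the two problems differ by the additive constant $-\alpha \cdot J^c(\inpVecOpt,\perVec,x_0)$. A secondary bookkeeping item is noting that, as the excerpt observes, the quadratic form on the right-hand side of \cref{eq:control_cost_diff} is independent of $x_0$, so the reformulated Problem~2 is genuinely initial-state-free even though Problem~1's objective was written in terms of $x_0$.
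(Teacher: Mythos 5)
Your proposal is correct and follows essentially the same argument as the paper's proof in \cref{appx:correctness}: both reduce the equivalence to the closed-form identity \cref{eq:control_cost_diff} plus the observation that $\alpha \cdot J^c\big(\inpVecOpt,\perVec,x_0\big)$ is a $\choiceVec$-independent constant whose subtraction leaves the $\argmin{\choiceVec \in \mathbf{C}}$ unchanged (you merely run the chain of rewrites in the opposite direction). Your added remarks---that \cref{eq:relCtrlCost} is not actually needed for the argmin-preservation argument and that ``equivalent'' should be read as identical minimizers rather than identical optimal values---are accurate clarifications of points the paper leaves implicit.
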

\begin{proof}
In \cref{appx:correctness}.
\end{proof}

\textit{We note that the reformulation in \cref{prob:model_selection} is independent of the initial set $x_0$}. That is, the set of optimal choices $\choiceVec_{opt}$, as computed by our solution, is independent of $x_0$. %Therefore, the model selection problem need not be solved from scratch when the initial set changes, if our solution is used.
It is worthwhile to mention that a model selection sequence, that is independent of the initial state, enables an engineer/practitioner to compute the model selection sequence just once and use it anywhere regardless of its initial set. In most practical scenarios, where the initial state cannot be known apriori, our solution is extremely effective as the model selection sequence need not be solved from scratch when the initial state changes.

\section{Model Selection Problem When All the Perception Outputs are Known}
\label{sec:all_aware_solution_model_selection}
We note that \cref{prob:model_selection} assumes that the difference between the perception model's output $\perModel{\cdot}$ and the true perception is known. But in practice, this might not hold true for many cases. In such cases, one can only know a distribution, not the exact difference between the true and perception models' output. In this section, we will propose an exact solution to \cref{prob:model_selection} (assuming the true difference is known), but in subsequent sections we will relax this assumption to handle practical scenarios.

Since our solution to the problem is based on an optimization formulation, before we put forward our solution to \cref{prob:model_selection}, we wish to layout our model selection encoding. That is, how the perception $\per{t}$ and the cost $cost_t$, at a time step $t$, can be encoded using binary variables:

\begin{align}
    \per{t} = \sum\limits_{i=0}^{W-1} b^t_i \cdot \perModel{i,t} \label{eq:perEncoding},\\
    cost_t = \sum\limits_{i=0}^{W-1} b_i^t \cdot \Upsilon \label{eq:costEncoding},
\end{align}

such that $\sum\limits_{i=0}^{W-1} b^t_i = 1$, where $\forall_{b_i^t \in \integers_{[0,1]}}$. Intuitively, if $b_w^t=1$ (and rest all 0), it implies that the model $0 \le w \le W-1$ has been selected at time step $t$; \textit{i.e.}, $\per{t}=\perModel{w,t}$ with $cost_t=\perModelCost{w}$. 

Let $\binChoiceVec$ denote the set of Boolean variables required to encode the model choices up-to time step $H$.
\begin{equation*}
    \binChoiceVec =
    \{
    b^0_0, b^0_1, \dots, b^0_{W-1}, b^1_{0}, \dots b^1_{W-1}, \dots b^{H-1}_{W-1}.
    \}
\end{equation*}

Note that, clearly, $\choiceVec$ and $\binChoiceVec$ are equivalent (\cref{appx:bc_equivalence}).

Using this idea, one can encode $\perVecChoice{c}$ as follows (simply by replacing the elements $\per{t}$ in $\perVecChoice{c}$):
\begin{equation}
\label{eq:s_c}
\perVecChoice{c}
=
\begin{bmatrix}
\sum\limits_{i=0}^{W-1} b^0_i \cdot \perModel{i,0} \\
\vdots \\
\sum\limits_{i=0}^{W-1} b^t_i \cdot \perModel{i,t}\\
\vdots \\
\sum\limits_{i=0}^{W-1} b^{H-1}_i \cdot \perModel{i,H-1}
\end{bmatrix}.
\end{equation}

Therefore, $\perVecChoice{c} - \perVec$ can encoded and rewritten, in terms of $\binChoiceVec$, as follows: 
\begin{equation}
\label{eq:perDiffEncoding}
\perVecChoice{c} - \perVec
=
\mathcal{V}(\binChoiceVec)=
\begin{bmatrix}
\sum\limits_{i=0}^{W-1} b^0_i \cdot \perModel{i,0} - s_0\\
\vdots \\
\sum\limits_{i=0}^{W-1} b^t_i \cdot \perModel{i,t} - s_t\\
\vdots \\
\sum\limits_{i=0}^{W-1} b^{H-1}_i \cdot \perModel{i,H-1} - s_{H-1}
\end{bmatrix},
\end{equation}

such that $\forall_{0 \le t \le H-1} \sum\limits_{i=0}^{W-1} b_i^t=1$.

Similarly, $\perceptionCost{\choiceVec}$ can be encoded and overloaded with $\binChoiceVec$ as follows:

\begin{equation}
\label{eq:costBinEncoding}
\perceptionCost{\choiceVec}
=
\perceptionCost{\binChoiceVec}
=
\sum\limits_{t=0}^{H-1} \left( \sum\limits_{i=0}^{W-1} b_i^t \cdot i \cdot \Upsilon \right).
\end{equation}

Therefore, using \cref{eq:perDiffEncoding} and \cref{eq:costBinEncoding}, we provide an optimal solution to \cref{prob:model_selection} in \cref{sol:model_selection} using a Boolean optimization problem.

\begin{solution}[Solution to \cref{prob:model_selection}]
\label{sol:model_selection}
The following Boolean optimization problem provides an optimal solution to \cref{prob:model_selection}.
\begin{equation}
\begin{aligned}
\min_{\binChoiceVec} \quad & \alpha \cdot  \Big(\mathcal{V}(\binChoiceVec)^{\top} \times \Psi \times \mathcal{V}(\binChoiceVec) \Big) + \beta \cdot \perceptionCost{\binChoiceVec} \\
\textrm{s.t.} \quad & \forall_{t} \sum\limits_{i=0}^{W-1} b^t_i = 1\\
  &\forall_{t,i} b^t_i \in \integers_{[0,1]}.    \\
\end{aligned}
\end{equation}
\end{solution}

\begin{remark}
The Boolean optimization formulation in \cref{sol:model_selection}, using $H \cdot W$ Boolean variables, is NP-Hard. \cite{qp_complexity,qp_complexity_2}
\end{remark}

We note that though the proposed solution is in general NP-Hard, but for most practical cases, using off the shelf tools, one can solve such optimization problems very efficiently using several heuristics. In particular, optimization based solutions have been effectively used in embedded hybrid model predictive control~\cite{miqp_app1}, optimal coordination of vehicles at intersections~\cite{miqp_app2} \textit{etc}. The same will also been witnessed in our experiments.

\section{An expectation based approach to the Model Selection Problem}
\label{sec:exp_solution_model_selection}
In \cref{sec:all_aware_solution_model_selection} we proposed a solution to \cref{prob:model_selection} assuming that $(\perVecChoice{c} - \perVec)$ is known. However, in reality, this assumption might not always hold true, for reasons such as unavailability of perception models \big($\perModel{\cdot}$ in \cref{eq:perceptionModel}\big) or true perception input $\perVec$. Whereas, a probabilistic distribution of $(\perVecChoice{c} - \perVec)$ can still be known \cite{mit_pac_1,mit_pac_2,pac_lr}. In other words, instead of various perception models \big(as in \cref{eq:perceptionModel}\big), we have various distributions available to us at varying model costs. Formally, we assume the knowledge of the following:

\begin{equation}
    \label{eq:perceptionModelDistro}
    \left(\perModel{w,t} - s_t\right) \thicksim \mathcal{T}^w_t,
\end{equation}
\begin{equation}
    \label{eq:perceptionCostDistro}
    \perModelCost{w}=w \cdot \Upsilon,
\end{equation}

where $w \in \integers_{[0,W-1]}$ represents the model choice variable, $\Upsilon \in \reals_{>0}$, and $\mathcal{T}^w_t$ is a probability distribution represented with the random variable $\mathcal{D}^w_t$. Intuitively, $\mathcal{D}^0_t$ represents the lowest-quality perception input distribution (\textit{i.e.} high variance and mean farther away from 0) with minimum perception cost $\perModelCost{c_t}=0$ (at time step $t$). $\mathcal{D}^{W-1}_t$ represents the highest-quality perception input distribution (\textit{i.e.} low variance and mean closer to 0) with maximum perception cost $\perModelCost{W-1}=(W-1)\cdot \Upsilon$ (at time step $t$). Note that we do not impose any assumption on the distributions. 
However, in practice, the distributions are generally uniform or normal as the models estimate the target variable fairly well for most of the inputs, while performing relatively poor for some outlier cases. 
Given a random variable $\mathcal{D}^w_t$, let the expected value be given as $\expectation{\mathcal{D}^w_t}$, and variance as $\variance{\mathcal{D}^w_t}$. For a given choice of model $w$, we define the following functions:
\begin{align}
\label{eq:perModelExp}
\perModelExp{w,t} = \expectation{\mathcal{D}^w_t}, \\
\label{eq:perModelVar}
\perModelVar{w,t} = \variance{\mathcal{D}^w_t}.
\end{align}

Informally, given a model choice $w$, \cref{eq:perModelExp} returns the expected difference between output of the perception model and the true output \big(\textit{i.e.} $\expectation{\perModel{w,t}-s_t}$\big), and \cref{eq:perModelVar} returns the variance \big(\textit{i.e.} $\variance{\perModel{w,t}-s_t}$\big). Similarly, we extend the operator $\expectation{\cdot}$ and $\variance{\cdot}$ to the vector ($\perVecChoice{c} - \perVec$) in the obvious manner, denoted as $\expectation{\perVecChoice{c} - \perVec}$ and $\variance{\perVecChoice{c} - \perVec}$ respectively (\cref{appx:exp_diff}).

Naturally, in the setting described above, \cref{prob:model_selection} recasts itself from minimizing true total loss (see \cref{prob:model_selection_naive}) to minimizing expected total loss. We formally state the problem as follows:

\begin{problem}[Model Selection Problem]
\label{prob:model_selection_expectation}
Given a system with dynamics as in \cref{eq:dynamics}, and perception model distribution \big($\perModelExp{\cdot}$ and $\perModelVar{\cdot}$\big) with cost $\perModelCost{\cdot}$, compute a set of choices $\choiceVec_{opt}$ for $H$ time steps, such that, for a given $\alpha,\beta$:
\begin{equation}
\label{opt:model_selection_expectation}
\begin{aligned}
\min_{\choiceVec \in \mathbf{C}} \quad \expectation{\alpha \cdot  \Big((\perVecChoice{\choiceVec} - \perVec)^{\top} \times \Psi \times (\perVecChoice{\choiceVec} - \perVec) \Big) + \beta \cdot \perceptionCost{\choiceVec}}.
\end{aligned}
\end{equation}
\end{problem}

Given $\expectation{\perVecChoice{c} - \perVec}=\expectation{\mathcal{V}(\binChoiceVec)}$ and $\variance{\perVecChoice{c} - \perVec}=\variance{\mathcal{V}(\binChoiceVec)}$, \cref{prob:model_selection_expectation} can be solved by the following optimization problem, proposed in \cref{sol:model_selection_expectation}

\begin{solution}[Solution to \cref{prob:model_selection_expectation}]
\label{sol:model_selection_expectation}
The following Boolean optimization problem provides an optimal solution to \cref{prob:model_selection_expectation}.
\begin{equation}
\label{opt:model_selection_expectation_sol}
\begin{aligned}
\min_{\binChoiceVec} \quad & \alpha \cdot  \Big(\expectation{\mathcal{V}(\binChoiceVec})^{\top} \times \Psi \times \expectation{\mathcal{V}(\binChoiceVec)} \Big) + 
\\
\quad & \alpha \cdot \mathbb{V} \cdot \variance{\mathcal{V}(\binChoiceVec)} + 
\beta \cdot \perceptionCost{\binChoiceVec} \\
\textrm{s.t.} \quad & \forall_{t} \sum\limits_{i=0}^{W-1} b^t_i = 1\\
  &\forall_{t,i} b^t_i \in \integers_{[0,1]},
\end{aligned}
\end{equation}
where 
\begin{equation}
    \mathbb{V} =
    \begin{bmatrix}
    \Psi[0,0] & \Psi[1,1] & \hdots & \Psi[K-1,K-1]
    \end{bmatrix},
\end{equation}
where $K=p \cdot H$ is the dimension of $\Psi$.
\end{solution}

\begin{theorem}
\cref{sol:model_selection_expectation} solves \cref{prob:model_selection_expectation}, assuming the random variables $\mathcal{D}^w_t$, for all $t$ and $w$, are independent.
\end{theorem}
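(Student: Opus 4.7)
The plan is to prove the theorem by expanding the expected total loss of \cref{prob:model_selection_expectation} and matching it term-by-term with the objective of \cref{sol:model_selection_expectation}. By linearity of expectation, I would first split the objective into an expected control term $\alpha \cdot \expectation{(\perVecChoice{\choiceVec} - \perVec)^{\top} \Psi (\perVecChoice{\choiceVec} - \perVec)}$ and an expected perception term $\beta \cdot \expectation{\perceptionCost{\choiceVec}}$. Since $\perceptionCost{\choiceVec}$ is a deterministic function of the (non-random) decision variables $\binChoiceVec$, the expectation on the perception term is a no-op and matches the perception term of \cref{sol:model_selection_expectation} directly via \cref{eq:costBinEncoding}.

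For the control term, I would rewrite $\perVecChoice{\choiceVec} - \perVec$ as $\mathcal{V}(\binChoiceVec)$ using the encoding of \cref{eq:perDiffEncoding}, and then invoke the standard identity for expectations of quadratic forms of a random vector $X$ with finite second moments,
\begin{equation*}
\expectation{X^{\top} \Psi X} \;=\; \expectation{X}^{\top} \Psi \, \expectation{X} \;+\; \operatorname{tr}\!\bigl(\Psi \, \operatorname{Cov}(X)\bigr),
\end{equation*}
applied to $X = \mathcal{V}(\binChoiceVec)$. The first summand reproduces $\expectation{\mathcal{V}(\binChoiceVec)}^{\top} \Psi \, \expectation{\mathcal{V}(\binChoiceVec)}$ exactly as in \cref{opt:model_selection_expectation_sol}. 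The remaining task is to show that, under the independence assumption, the trace term collapses to $\mathbb{V} \cdot \variance{\mathcal{V}(\binChoiceVec)}$.

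The key step is to argue that $\operatorname{Cov}(\mathcal{V}(\binChoiceVec))$ is diagonal. The $t$-th component of $\mathcal{V}(\binChoiceVec)$ is $\sum_i b^t_i (\perModel{i,t} - s_t)$, a deterministic linear combination (the $b^t_i$ are decision variables, not random) of the random variables $\mathcal{D}^i_t$. Since distinct time indices $t \neq t'$ correspond to disjoint index sets $\{(i,t)\}_i$ and $\{(i,t')\}_i$, and the family $\{\mathcal{D}^w_t\}_{w,t}$ is assumed independent, the cross-covariance between the $t$-th and $t'$-th components of $\mathcal{V}(\binChoiceVec)$ vanishes. The $t$-th diagonal entry is then $\variance{\sum_i b^t_i (\perModel{i,t} - s_t)}$, which, together with the constraint $\sum_i b^t_i = 1$ with $b^t_i \in \{0,1\}$, equals the $t$-th component of $\variance{\mathcal{V}(\binChoiceVec)}$ as defined in the paragraph preceding \cref{prob:model_selection_expectation}. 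Consequently, $\operatorname{tr}(\Psi \operatorname{Cov}(\mathcal{V}(\binChoiceVec))) = \sum_{k=0}^{K-1} \Psi[k,k] \cdot \variance{\mathcal{V}(\binChoiceVec)}[k] = \mathbb{V} \cdot \variance{\mathcal{V}(\binChoiceVec)}$.

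Putting the three pieces together, the objective of \cref{prob:model_selection_expectation} is identical, as a function of $\binChoiceVec$, to that of \cref{sol:model_selection_expectation}, and the feasibility constraints (one-hot encoding, Booleanity) coincide. Since $\choiceVec$ and $\binChoiceVec$ are in bijection (\cref{appx:bc_equivalence}), the optimizers coincide, proving the theorem. The main obstacle I anticipate is being careful about the independence structure: the $\mathcal{D}^w_t$ indexed by \emph{all} $w$ (not just the one selected) must be independent across $t$, and expectations/variances must be interpreted with $\binChoiceVec$ held fixed. A clean separation of the deterministic decision variables from the random perception errors, perhaps stated as a short lemma, will make the diagonality argument rigorous.
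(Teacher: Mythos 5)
Your proposal is correct and follows essentially the same route as the paper: the paper's proof (detailed in its appendix on minimizing the expectation) likewise expands $\expectation{\mathbf{d}^{\top}\Psi\mathbf{d}}$ component-wise, uses independence to replace $\expectation{d_i d_j}$ by $\expectation{d_i}\expectation{d_j}$ off the diagonal and by $\expectation{d_j}^2+\variance{d_j}$ on the diagonal, and arrives at exactly the mean-quadratic-form-plus-$\mathbb{V}\cdot\variance{\cdot}$ decomposition; your trace-of-covariance identity is just the packaged form of that same expansion. The one caveat you flag at the end is real but shared with the paper: when $p>1$ the diagonality of $\operatorname{Cov}(\mathcal{V}(\binChoiceVec))$ within a single time block requires independence (or at least uncorrelatedness) of the scalar components of each $\mathcal{D}^w_t$, which the paper imposes as a separate assumption on all $pH$ scalar error terms.
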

\begin{proof}
If the random variables are $\mathcal{D}^w_t$ (for all $t$ and $w$) are independent, the objective function given in \cref{opt:model_selection_expectation} can be rewritten as (see \cref{appx:minimizing_exp}):
\begin{align*}
    \expectation{\alpha \cdot  \Big((\perVecChoice{\choiceVec} - \perVec)^{\top} \times \Psi \times (\perVecChoice{\choiceVec} - \perVec) \Big) + \beta \cdot \perceptionCost{\choiceVec}} = \\
\alpha \cdot  \Big(\expectation{(\perVecChoice{\choiceVec} - \perVec)}^{\top} \times \Psi \times \expectation{(\perVecChoice{\choiceVec} - \perVec)} \Big) + 
\\
\alpha \cdot \mathbb{V} \cdot \variance{(\perVecChoice{\choiceVec} - \perVec)} \nonumber 
+ \beta \cdot \perceptionCost{\choiceVec}.
\end{align*}
Since $\perVecChoice{c} - \perVec=\mathcal{V}(\binChoiceVec)$, with constraints $\forall_{t} \sum\limits_{i=0}^{W-1} b^t_i = 1$ and
$\forall_{t,i} b^t_i \in \integers_{[0,1]}$, the above equation evaluates to the following:
\begin{align*}
    \Big(\expectation{\mathcal{V}(\binChoiceVec})^{\top} \times \Psi \times \expectation{\mathcal{V}(\binChoiceVec)} \Big) + \\
\alpha \cdot \mathbb{V} \cdot \variance{\mathcal{V}(\binChoiceVec)} + 
\beta \cdot \perceptionCost{\binChoiceVec}.
\end{align*}
Finally the optimization in \cref{opt:model_selection_expectation_sol} is formulated by replacing the objective function with the above equivalent function, and the added constraints. 
\end{proof}

\begin{remark}
\label{remark:np_status}
The Boolean optimization formulation in \cref{sol:model_selection_expectation}, using $H \cdot W$ Boolean variables, is NP-Hard. \cite{qp_complexity,qp_complexity_2}
\end{remark}

We note that \cref{remark:np_status} states the proposed solution is NP-Hard; it does not claim that the problem is NP-Hard. The complexity class of the model selection problem is currently unknown, and is left as a future work.

Note that though we consider a more general version in \cref{prob:model_selection_expectation} than \cref{prob:model_selection}, we still do not introduce any additional variables to the optimization formulation. The only additional term is added in the objective function, which is again linear.

\section{A Special Subcase: Model Selection in Polynomial Time Complexity}
\label{sec:polytime_subcase}
\begin{figure}[t]
\begin{center}
\includegraphics[width=\linewidth]{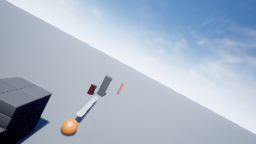}
\end{center}
\caption{\textbf{Dataset.} The dataset is gathered by equipping the drone with a camera in the given environment.}
\label{fig:airsim_dataset}
\end{figure}

\begin{figure*}
    % \begin{subfigure}{.48\textwidth}
    %     \centering
    %     \includegraphics[width=\linewidth]{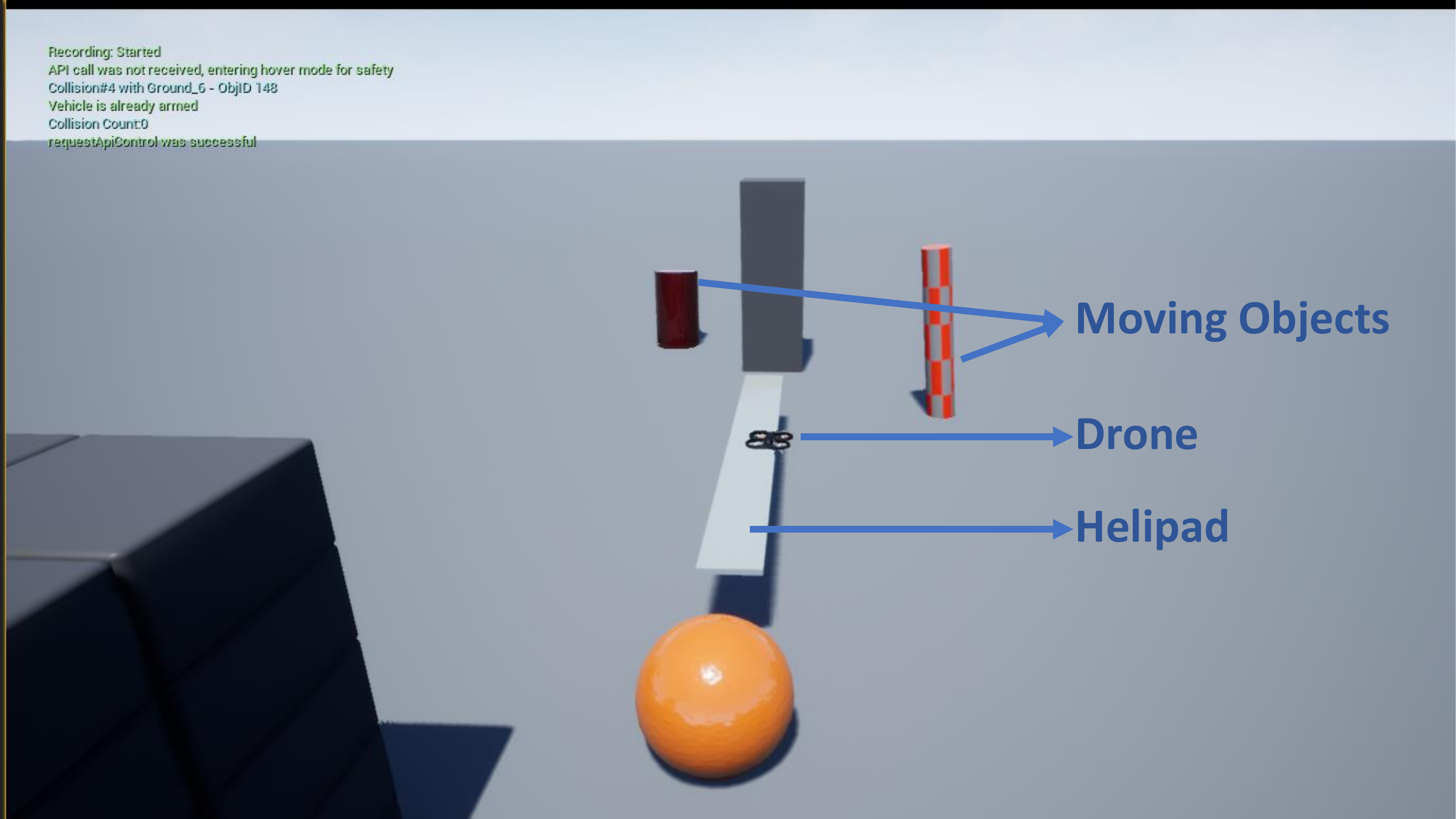}
    %     \vspace{0.00em}
    %     \caption{\textbf{AirSim Environment.} The environment emulates a drone landing on a helipad, where the environment also has a lot of other fixed and moving objects.}
    %     \label{fig:airsim_env}
    % \end{subfigure}
    % \hfill
    % \begin{subfigure}{.48\textwidth}
    %     \centering
    %     \includegraphics[width=\linewidth]{images/airsim_dataset.png}
    %     %\vspace{-1.4em}
    %     \caption{\textbf{Dataset.} The dataset is gathered by equipping the drone with a camera in the given environment.}
    %     \label{fig:airsim_dataset}
    % \end{subfigure}
    % \vfill
    % \vspace{4em}
    \begin{subfigure}{.48\textwidth}
        \centering
        \includegraphics[width=\linewidth]{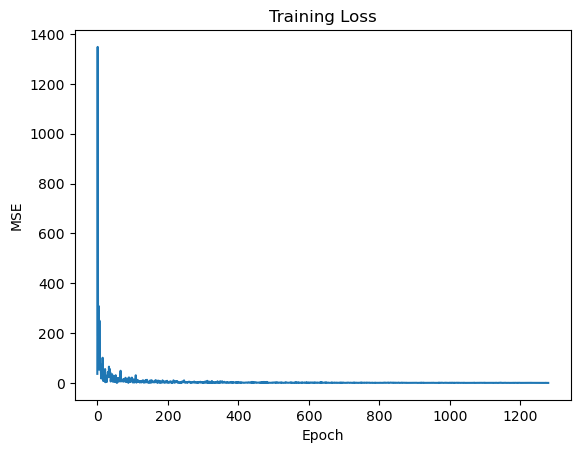}
        %\vspace{-1.4em}
        \caption{\textbf{EfficientNet b7 Training Loss.} The $x$ axis represents the epoch, and the $y$ axis represents the MSE loss at that epoch. 
        }
        \label{fig:b7_training_plot}
    \end{subfigure}
    \hfill
     \begin{subfigure}{.48\textwidth}
        \centering
        \includegraphics[width=\linewidth]{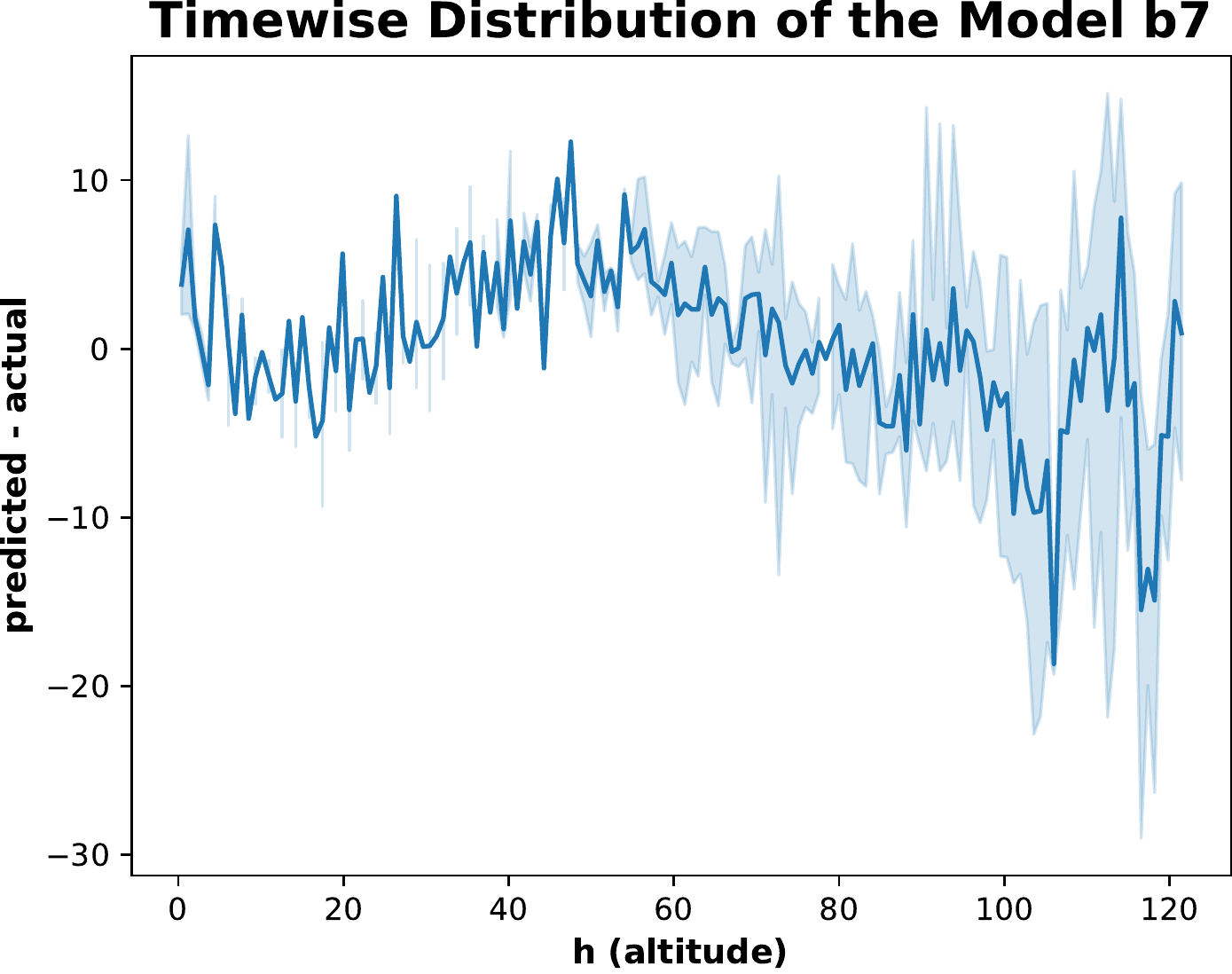}
        \vspace{0em}
        \caption{\textbf{EfficientNet b7 Distribution.} The $x$ axis represents the altitude of the drone, and the $y$ axis represents the corresponding distribution at that altitude.}
        \label{fig:airsim_distro_b7}
    \end{subfigure}
    %\caption{Computed mean values of $\devub$ (blue) and deviations (cyan) with varying $c$.}
    \caption{Data Gathering using AirSim}
    \label{fig:dataGathering}
\end{figure*}

The models that we have been considering so far were discrete, \textit{i.e.}, the choice of the model $w$ in \cref{eq:perceptionModel} and \cref{eq:perceptionCost} in \cref{sol:model_selection}, are given by integer choices $w \in \integers_{[0,W-1]}$. Similarly the probabilistic models, given in \cref{eq:perceptionModelDistro} and \cref{eq:perceptionCostDistro}, in \cref{sol:model_selection_expectation} consider discrete choices of models, \textit{i.e.}, $w \in \integers_{[0,W-1]}$. In this section, contrary to our models in  \cref{eq:perModelExp}, \cref{eq:perModelVar} and \cref{eq:perceptionCostDistro}, we consider models  in a continuous setting. That is, we consider a continuous set of models to solve \cref{prob:model_selection_expectation} in polynomial time. Formally, we consider the following models.

\begin{align}
\label{eq:perModelExpCont}
\perModelExpCont{c,t} &=& (1-c) \cdot \expectation{\mathcal{D}^0_t} + c \cdot \expectation{\mathcal{D}^{W-1}_t}, \\
\label{eq:perModelVarCont}
\perModelVarCont{c,t} &=& (1-c) \cdot \variance{\mathcal{D}^0_t} + c \cdot \variance{\mathcal{D}^{W-1}_t},
\end{align}

where $c \in \reals_{[0,1]}$,  $\mathcal{D}^0_t$ and $\mathcal{D}^{W-1}_t$ are random variables representing the least and the most accurate distributions, as mentioned before. The cost model is given as:

\begin{equation}
    \label{eq:perceptionCostDistroCont}
    \perModelCostCont{c}=c \cdot \Upsilon.
\end{equation}

Intuitively, this setting assumes a linear combination of the best and the worst possible models are available. In other words, with $c=1$, we get the best possible output \big(\textit{i.e.} $\expectation{\mathcal{D}^0_t}$ and $\variance{\mathcal{D}^0_t}$ closer to 0\big) with cost $\Upsilon$, and vice-versa with $c=0$.

Next, we restate \cref{prob:model_selection_expectation} in this setting and propose a polynomial time solution. But before we proceed to restating \cref{prob:model_selection_expectation}, we are required to put forward the necessary artifacts.

Let $\mathbf{C}^c$ be the set of all possible choices. Let $\choiceVec=\big\{c_0, c_1, \cdots, c_{H-1}\big\} \in \mathbf{C}^c$ be a set of choices up-to time step $H-1$, where $c_i \in \reals_{[0,1]}$. Using this, we rewrite $\expectation{\perVecChoice{c} - \perVec}$ by replacing the $H \cdot W$ binary variables with $H$ real variables as follows:

\begin{align}
\label{eq:perDiffEncodingCont}
&\expectation{\perVecChoice{c} - \perVec}
=
\expectation{\mathcal{V}
(\choiceVec)} 
= \nonumber
\\
&\begin{bmatrix}
(1-c_0) \cdot \expectation{\mathcal{D}^0_0} + c_0 \cdot \expectation{\mathcal{D}^{W-1}_0}\\
\vdots \\
(1-c_t) \cdot \expectation{\mathcal{D}^0_t} + c_t \cdot \expectation{\mathcal{D}^{W-1}_0}\\
\vdots \\
(1-c_{H-1}) \cdot \expectation{\mathcal{D}^0_{H-1}} + c_{H-1} \cdot \expectation{\mathcal{D}^{W-1}_{H-1}}
\end{bmatrix}.
\end{align}

Now we restate \cref{prob:model_selection_expectation} in a continuous setting as follows:

\begin{problem}[Model Selection Problem with Continuous Choices]
\label{prob:model_selection_expectation_cont}
Given a system with dynamics as in \cref{eq:dynamics}, and perception model distribution \big($\perModelExpCont{\cdot}$ and $\perModelVarCont{\cdot}$\big) with cost $\perModelCostCont{\cdot}$, compute a set of choices $\choiceVec_{opt}$ for $H$ time steps, such that, for a given $\alpha,\beta$:
\begin{equation}
\label{opt:model_selection_expectation_cont}
\begin{aligned}
\min_{\choiceVec \in \mathbf{C}} \quad \expectation{\alpha \cdot  \Big((\perVecChoice{\choiceVec} - \perVec)^{\top} \times \Psi \times (\perVecChoice{\choiceVec} - \perVec) \Big) + \beta \cdot \perceptionCost{\choiceVec}}
\end{aligned}
\end{equation}
\end{problem}

Given $\expectation{\perVecChoice{c} - \perVec}=\expectation{\mathcal{V}(\choiceVec)}$ and $\variance{\perVecChoice{c} - \perVec}=\variance{\mathcal{V}(\choiceVec)}$, \cref{prob:model_selection_expectation_cont} can be solved by the following optimization problem, proposed in \cref{sol:model_selection_expectation_cont}

\begin{solution}[Solution to \cref{prob:model_selection_expectation}]
\label{sol:model_selection_expectation_cont}
The following Boolean optimization problem provides an optimal solution to \cref{prob:model_selection_expectation_cont}.
\begin{equation}
\label{opt:model_selection_cont}
\begin{aligned}
\min_{\choiceVec \in \mathbf{C}^c} \quad & \alpha \cdot  \Big(\expectation{\mathcal{V}(\choiceVec})^{\top} \times \Psi \times \expectation{\mathcal{V}(\choiceVec)} \Big) + \alpha \cdot \mathbb{V} \cdot \variance{\mathcal{V}(\choiceVec)} \\ \quad & + \beta \cdot \perceptionCost{\choiceVec} \\
\textrm{s.t.} \quad & \forall_{t} c_t \in \reals_{[0,1]},    \\
\end{aligned}
\end{equation}
where 
\begin{equation}
    \mathbb{V} =
    \begin{bmatrix}
    \Psi[0,0] & \Psi[1,1] & \hdots & \Psi[K-1,K-1] 
    \end{bmatrix}
\end{equation}
$K=p \cdot H$ is the dimension of $\Psi$.
\end{solution}

\begin{theorem}
The quadratic optimization formulation in \cref{sol:model_selection_expectation_cont}, using $H$ real variables, can be solved in polynomial time.
\end{theorem}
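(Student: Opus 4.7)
The plan is to show that the optimization in \cref{sol:model_selection_expectation_cont} is a convex quadratic program (QP) over a box-constrained feasible set, and then invoke a standard result that convex QPs with polyhedral constraints are solvable in polynomial time (e.g., by the ellipsoid method of Kozlov--Tarasov--Khachiyan, or by interior-point methods). Concretely, the objective has three additive pieces, and I would handle them one at a time, then assemble.

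First, I would show that $\expectation{\mathcal{V}(\choiceVec)}$ is an affine function of the vector $\choiceVec=(c_0,\dots,c_{H-1})^\top \in \reals^H$. From \cref{eq:perDiffEncodingCont}, each block of $\expectation{\mathcal{V}(\choiceVec)}$ has the form $(1-c_t)\expectation{\mathcal{D}^0_t} + c_t \expectation{\mathcal{D}^{W-1}_t}$, so we can write $\expectation{\mathcal{V}(\choiceVec)} = \mathbf{d} + M\choiceVec$ for an explicit constant vector $\mathbf{d}$ and matrix $M$ determined by the fixed quantities $\expectation{\mathcal{D}^0_t}$ and $\expectation{\mathcal{D}^{W-1}_t}$. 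Substituting this into the first term of the objective gives $\alpha\,(\mathbf{d}+M\choiceVec)^\top \Psi (\mathbf{d}+M\choiceVec)$, which expands into a quadratic form in $\choiceVec$ whose Hessian is $2\alpha\, M^\top \Psi M$. Since $\Psi$ is positive semidefinite (as noted after \cref{eq:control_cost_diff}) and $\alpha \ge 0$, this Hessian is PSD, so the first term is convex in $\choiceVec$.

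Next, I would argue the remaining two terms are affine in $\choiceVec$. By the modeling choice in \cref{eq:perModelVarCont}, each component of $\variance{\mathcal{V}(\choiceVec)}$ equals $(1-c_t)\variance{\mathcal{D}^0_t} + c_t \variance{\mathcal{D}^{W-1}_t}$, which is affine in $c_t$; taking the dot product with the constant row vector $\mathbb{V}$ yields an affine function of $\choiceVec$. Similarly, using \cref{eq:perceptionCostDistroCont}, the perception cost term is $\beta \cdot \perModelCostCont{\choiceVec} = \beta \Upsilon \sum_{t=0}^{H-1} c_t$, which is linear. Summing all three pieces, the full objective is a convex quadratic in $\choiceVec$, and the constraint set $\{\choiceVec : c_t \in [0,1]\,\forall t\} = [0,1]^H$ is a polytope cut out by $2H$ linear inequalities.

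Finally, I would cite the classical polynomial-time solvability of convex QPs over polyhedra to conclude. The instance size is $O(H^2)$ plus the bit-length of the coefficients of $M$, $\Psi$, $\mathbb{V}$, $\mathbf{d}$, $\Upsilon$, $\alpha$, $\beta$, all of which are fixed by the problem data and do not blow up. I do not anticipate a genuine obstacle here: the only place anything could go wrong is the convexity of the quadratic term, and that is immediate from $\Psi \succeq 0$; the linearity of the variance and cost terms is built into the definitions \cref{eq:perModelVarCont} and \cref{eq:perceptionCostDistroCont}. The mildly delicate point worth mentioning explicitly in the write-up is that the box constraints $c_t \in [0,1]$ preserve polynomiality (as opposed to the integer constraints used in \cref{sol:model_selection_expectation}, which is what caused the NP-hardness in \cref{remark:np_status}), highlighting exactly why relaxing to a continuous family of models yields tractability.
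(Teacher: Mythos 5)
Your proof is correct, but it takes a genuinely different and noticeably leaner route than the paper's. You observe that $\expectation{\mathcal{V}(\choiceVec)}$ is affine in $\choiceVec$, write it as $\mathbf{d}+M\choiceVec$, and conclude that the quadratic part of the objective has Hessian $2\alpha\,M^{\top}\Psi M$, which is positive semidefinite by congruence since $\Psi\succeq 0$; together with the affinity of the variance and cost terms and the box constraints $[0,1]^{H}$, this gives a convex QP over a polytope, solvable in (weakly) polynomial time by Kozlov--Tarasov--Khachiyan or interior-point methods. The paper instead duplicates the choice variables into a $pH$-dimensional vector $\choiceVec'$ with linear equality constraints, puts the objective in canonical form with a quadratic matrix $\Psi'=\Phi\circ\Psi$ expressed as a Hadamard product, proves $\Psi'\succeq 0$ via the Schur Product Theorem \emph{plus} an additional perturbation-theoretic assumption on the perception model (\cref{ass:pt}) to establish that $\Phi$ is PSD, and then recasts the whole problem as a semidefinite program (\cref{opt:sdp}) to claim polynomial time. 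Your congruence argument buys two things the paper's route does not: it needs no extra assumption on the $\Gamma$ terms (the PSD-ness of $M^{\top}\Psi M$ is unconditional), and it avoids both the variable duplication and the detour through SDP, landing directly on a standard convex-QP complexity result. The paper's route, for its part, produces an explicit SDP formulation that a practitioner could hand to a conic solver, and its Hadamard-product decomposition makes the dependence of the quadratic coefficients on the individual model statistics $\Gamma^{0}_{i},\Gamma^{1}_{i}$ visible. The one point you should make explicit in a final write-up is the exact form of $M$ (its $(pt+r,t)$ entry is $\expectation{\mathcal{D}^{W-1}_{t}}[r]-\expectation{\mathcal{D}^{0}_{t}}[r]$), so that the reduction from the blockwise expression in \cref{eq:perDiffEncodingCont} to the affine form is unambiguous; otherwise there is no gap.
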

\begin{proof}
The proof hinges on casting the quadratic optimization problem in \cref{sol:model_selection_expectation_cont} to a semidefinite program \cite{sdp}. The details of the proof is given in \cref{subsec:polyProof}.
\end{proof}

\subsection{Polynomial Time Proof}
\label{subsec:polyProof}
Note that each element in the vector $\expectation{\mathcal{V}
(\choiceVec)}$ is a $p$ dimensional vector. Therefore, the vector $\expectation{\mathcal{V}
(\choiceVec)}$ can be unwrapped and rewritten with equivalent choice variables 
\begin{equation}
    \choiceVec' =
    \begin{bmatrix}
    c'_0 & c'_1 & \hdots & c'_{p \cdot H -1},
    \end{bmatrix}^\top
\end{equation}

where $\forall_{t=0}^{H-1}$ $c_t=c'_{p\cdot t}=\cdots=c'_{p\cdot t -1}$.
Let $\expectation{\mathcal{V}
(\choiceVec)} = \expectation{\mathcal{V}
(\choiceVec')}$  (see \cref{appx:varTransform} for details). Further, we rewrite $\perceptionCost{\choiceVec}$ \big(\cref{eq:jp}\big) in terms of $\choiceVec'$ as $\Im\choiceVec'$; where $\Im \in \reals^{1 \times p \cdot H}$ can be computed easily $s.t.$ $\perceptionCost{\choiceVec}=\Im\choiceVec'$.

Using the above idea, we rewrite the optimization problem in  \cref{opt:model_selection_cont} as follows:
\begin{equation}
\label{eq:sdp_proof_4}
\begin{aligned}
\min_{\choiceVec \in \mathbf{C}^c} \quad & \alpha \cdot  \Big(\expectation{\mathcal{V}(\choiceVec')}^{\top} \times \Psi \times \expectation{\mathcal{V}(\choiceVec')} \Big) \\ \quad & + \alpha \cdot \mathbb{V} \cdot \variance{\mathcal{V}(\choiceVec')} + \beta \cdot \perceptionCost{\choiceVec'} \\
\textrm{s.t.} \quad & \forall_{0 \le t \le H-1} \forall_{1 \le r \le p-1}~~~~~ W_{t,r}^{\top} \choiceVec' = 0 \\
\quad & \forall_{0 \le t \le H-1} ~~~~~~~~~~ 0 \le E_t^{\top} \choiceVec' \le 1, \\
\end{aligned}
\end{equation}

where: $W_{t,r}$ is column matrix of size $pH$ with all zeros, except $W_{t,r}[pt+(r-1)]=1$ and $W_{t,r}[pt+r]=-1$; $E_{t}$ is column matrix of size $pH$ with all zeros, except $E_t[t]=1$.
Intuitively, the first constraint encodes  $\forall_{t=0}^{H-1}$ $c'_{p\cdot t}=\cdots=c'_{p\cdot t -1}$, and the second constraint ensures that choice variables are in $\reals_{[0,1]}$.

Note that for a given $t$, $\expectation{\mathcal{D}^0_t}$ and $\expectation{\mathcal{D}^{W-1}_t}$ are also a $p$ dimensional vector. Let the $r$-th element of these vectors be denoted as $\expectation{\mathcal{D}^{0}_t}[r]$ and $\expectation{\mathcal{D}^{W-1}_t}[r]$. Let $\expectation{\mathcal{D}^{0}_t}[r]=\Gamma^0_{pt+r}$ and $\expectation{\mathcal{D}^{W-1}_t}[r]=\Gamma^1_{pt+r}$. Now we can rewrite \cref{eq:perDiffEncodingCont} with $\choiceVec'$ as follows:

\begin{align}
\label{eq:perDiffEncodingContTransform}
& \expectation{\perVecChoice{c} - \perVec}
=
\expectation{\mathcal{V}
(\choiceVec)} 
= \nonumber
\\
& \begin{bmatrix}
(1-c'_0) \cdot \Gamma^0_0 + c'_0 \cdot \Gamma^1_0\\
\vdots \\
(1-c'_{pt+r}) \cdot \Gamma^0_{pt+r} + c'_t \cdot \Gamma^1_{pt+r}\\
\vdots \\
(1-c'_{pH-1}) \cdot \Gamma^0_{pH-1} + c'_{pH-1} \cdot \Gamma^1_{pH-1}
\end{bmatrix}.
\end{align}

Note that the term, $\alpha \cdot  \Big(\expectation{\mathcal{V}(\choiceVec')})^{\top} \times \Psi \times \expectation{\mathcal{V}(\choiceVec')} \Big)$, in \cref{eq:sdp_proof_4}, has quadratic, linear and constant terms in $\choiceVec'$. Following simple algebraic steps to separate the quadratic, linear and constant terms, we can restate the optimization formulation in \cref{eq:sdp_proof_4}, in its canonical form, as follows (the steps are provided in \cref{appx:canonical}):

\begin{equation}
\begin{aligned}
\min_{\choiceVec'} \quad & \choiceVec'^{\top} \Psi' \choiceVec' + (\mathcal{L} + \mathcal{R} + \beta \cdot \Im) \choiceVec' + \mathcal{K} + \mathcal{K}_2\\
\textrm{s.t.} \quad & \forall_{0 \le t \le H-1} \forall_{0 \le r \le p-1}~~~~~ W_{t,r}^{\top} \choiceVec' = 0 \\
\quad & \forall_{0 \le t \le H-1} ~~~ 0 \le E_t^{\top} \choiceVec' \le 1,  \\
\label{opt:canonical}
\end{aligned}
\end{equation}

where $\Psi'$ is a matrix with elements: $\Psi'[i][j]=\alpha \cdot \big(\Gamma^0_i\Gamma^0_j+\Gamma^0_i\Gamma^1_j+\Gamma^1_i\Gamma^0_j+\Gamma^1_i\Gamma^1_j\big) \cdot \Psi[i][j]$. $\mathcal{L}$ is the linear term (in variables $\choiceVec'$) after evaluating the term $\alpha \cdot  \Big(\expectation{\mathcal{V}(\choiceVec'})^{\top} \times \Psi \times \expectation{\mathcal{V}(\choiceVec')} \Big)$, and $\mathcal{K}$ is the constant term. Let $\alpha \cdot \mathbb{V} \cdot \variance{\mathcal{V}(\choiceVec')} = \mathcal{R} \cdot \choiceVec' + \mathcal{K}_2$. Note that, we only evaluated the quadratic terms $\Psi'$ (and not the linear and constant terms) because the complexity of a quadratic programming is dependent on the quadratic term \cite{sdp,qp_complexity,qp_complexity_2}.

\begin{theorem}
\label{thm:psi}
$\Psi'$ is a positive semidefine matrix.
\end{theorem}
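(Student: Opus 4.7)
The plan is to exhibit $\Psi'$ as (a scalar multiple of) a Hadamard product of two positive semidefinite matrices and invoke the Schur product theorem, or equivalently to rewrite the quadratic form $x^\top \Psi' x$ as a nonnegative combination of the already-PSD form induced by $\Psi$.

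First, I would observe the algebraic factorization
\begin{equation*}
\Gamma^0_i\Gamma^0_j+\Gamma^0_i\Gamma^1_j+\Gamma^1_i\Gamma^0_j+\Gamma^1_i\Gamma^1_j = (\Gamma^0_i+\Gamma^1_i)(\Gamma^0_j+\Gamma^1_j).
\end{equation*}
Defining $\gamma_i := \Gamma^0_i+\Gamma^1_i$, this gives $\Psi'[i][j] = \alpha\cdot \gamma_i\gamma_j\cdot \Psi[i][j]$, i.e.\ $\Psi' = \alpha\cdot (\gamma\gamma^\top)\circ \Psi$, where $\circ$ denotes the Hadamard (entrywise) product.

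Next I would test an arbitrary vector $x$ against $\Psi'$. Letting $D=\mathrm{diag}(\gamma)$,
\begin{align*}
x^\top \Psi' x &= \alpha \sum_{i,j} x_i x_j\, \gamma_i\gamma_j\, \Psi[i][j] \\
&= \alpha \sum_{i,j} (\gamma_i x_i)(\gamma_j x_j)\,\Psi[i][j] \\
&= \alpha\,(Dx)^\top \Psi\,(Dx).
\end{align*}
Since $\Psi=L^\top K^{-1}L$ is PSD (as stated immediately after \cref{eq:control_cost_diff}) and $\alpha\ge 0$ in the weighted-sum formulation of \cref{eq:jtot}, this quantity is nonnegative for every $x$, establishing $\Psi'\succeq 0$. (Equivalently, $\gamma\gamma^\top$ is PSD as an outer product, so the Schur product theorem yields $(\gamma\gamma^\top)\circ \Psi\succeq 0$, and scaling by $\alpha\ge 0$ preserves this.)

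There is not really a hard step here: the only subtlety is the factorization in the first bullet, after which the result is immediate. One caveat I would flag is the sign of $\alpha$: the paper declares $\alpha,\beta\in\reals$ in \cref{eq:jtot}, but the interpretation as cost weights (and the use of $\Psi'$ as the quadratic cost matrix in the program \cref{opt:canonical} that we want to solve by SDP/QP) implicitly requires $\alpha\ge 0$, which I would state as a standing assumption at the start of the proof.
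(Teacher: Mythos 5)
Your proposal is correct, and it takes a genuinely different --- and substantially stronger --- route than the paper. The paper also writes $\Psi'=\Phi\circ\Psi$ (modulo the factor $\alpha$, which it silently drops) and invokes the Schur product theorem, but to establish that $\Phi$ is positive semidefinite it decomposes $\Phi$ into a ``constant-plus-perturbation'' form, imposes an extra assumption bounding $\lVert\Phi_\delta\rVert_2/\lVert\Phi_c\rVert_2$ by the reciprocal condition number of $\Phi_c$ (\cref{ass:pt} in \cref{appx:psi}), and appeals to perturbation theory. Your observation that
\begin{equation*}
\Gamma^0_i\Gamma^0_j+\Gamma^0_i\Gamma^1_j+\Gamma^1_i\Gamma^0_j+\Gamma^1_i\Gamma^1_j=(\Gamma^0_i+\Gamma^1_i)(\Gamma^0_j+\Gamma^1_j)
\end{equation*}
shows that $\Phi=\gamma\gamma^\top$ is a rank-one outer product, hence positive semidefinite \emph{unconditionally}: the paper's \cref{ass:pt} and the perturbation argument are unnecessary for the theorem as stated (the scalars $\Gamma^0_i,\Gamma^1_i$ are exactly the unwrapped entries $\expectation{\mathcal{D}^{0}_t}[r]$ and $\expectation{\mathcal{D}^{W-1}_t}[r]$, so the factorization applies verbatim to the definition of $\Psi'$ given after \cref{opt:canonical}). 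Your direct verification $x^\top\Psi' x=\alpha\,(Dx)^\top\Psi\,(Dx)\ge 0$ with $D=\mathrm{diag}(\gamma)$ is also cleaner than citing the Schur product theorem, since it is elementary and makes the rank-one structure explicit. Finally, your caveat about the sign of $\alpha$ is a legitimate gap in the paper's own statement: \cref{eq:jtot} only requires $\alpha\in\reals$, yet both the theorem and the downstream SDP reduction need $\alpha\ge 0$; this should indeed be made a standing assumption.
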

\begin{proof}
Let, $\Psi'=\Phi \circ \Psi$, where $\Phi[i][j]=\Gamma^0_i\Gamma^0_j+\Gamma^0_i\Gamma^1_j+\Gamma^1_i\Gamma^0_j+\Gamma^1_i\Gamma^1_j$, and $A \circ B$ denotes the Hadamard product of the two matrices $A$ and $B$.

We note that $\Psi$ is a positive semi-definite matrix \cite{sandeep_neurips}. Further, $\Psi'$ is obtained by multiplying each element of the matrix $\Psi$ with a factor constituting of terms from from the perception model. Imposing some practical assumptions of $\Phi$ (formally stated in \cref{appx:psi}), we observe that $\Psi'$ does not differ sufficiently from $\Psi'$ to lose its semi-definite property. That is, when the factors, coming from the perception model, are multiplied with the elements of $\Psi$ it does not lose its semi-definite property. 

In other words, imposing some practical assumptions on $\Phi$ (formally stated in \cref{appx:psi}), using perturbation theory, we get $\Phi$ to be a positive semi-definite matrix. Therefore, using \emph{Schur Product Theorem} {\cite[ p. 479, Theorem 7.5.3]{spt}}, if $\Phi$ and $\Psi$ are positive semi-definite matrices, $\Psi'=\Phi \circ \Psi$ is also positive semi-definite.

\end{proof}

Since $\Psi'$ is a positive semidefinite matrix (from \cref{thm:psi}), the optimization problem \big(as in \cref{opt:canonical}\big)
cannot be solved in (weakly) polynomial time~\cite{KOZLOV1980223}(using the well-known ellipsoid method), and thus we propose a different technique to solve the optimization problem in polynomial time in the rest of the section.

Let $\Psi'=M^\top M$ [from \cref{thm:psi}]. Next, we convert the quadratic programming in \cref{opt:canonical} to a semidefinite programming \cite{sdp_lecture_notes}, therefore be able to solve it in polynomial time.

We rewrite the optimization problem in \cref{opt:canonical} as follows (Detailed steps in \cref{appx:sdp}):

\begin{equation}
\label{opt:sdp}
\begin{aligned}
\min_{\choiceVec',\theta} \quad & \theta \\
\textrm{s.t.} \quad & \forall_{0 \le t \le H-1} \forall_{1 \le r \le p-1}~~~~~ \begin{bmatrix}
1 & 0 \\
0 & -W^{\top}_{t,r} \choiceVec'
\end{bmatrix} \succeq 0 \\
\quad & \forall_{0 \le t \le H-1} \forall_{1 \le r \le p-1}~~~~~\begin{bmatrix}
1 & 0 \\
0 & W^{\top}_{t,r} \choiceVec'
\end{bmatrix} \succeq 0 \\
\quad & \forall_{0 \le i \le pH}~~ \begin{bmatrix}
1 & 0 \\
0 & -1 - E^{\top}_{t} \choiceVec'
\end{bmatrix} \succeq 0
\\
\quad & \forall_{0 \le i \le pH}~~ \begin{bmatrix}
I & 0 \\
0 & E^{\top}_{t} \choiceVec'
\end{bmatrix} \succeq 0 \\
\quad & \begin{bmatrix}
I & M \choiceVec' \\
\choiceVec'^{\top} M^{\top}& -(\mathcal{K}+ \mathcal{K}_2) - (\mathcal{L}+ \mathcal{R} +\beta \cdot \Im) \choiceVec' + \theta
\end{bmatrix} \succeq 0,
\end{aligned}
\end{equation}

where, $R \succeq 0$ means $R$ is a positive semidefinite matrix. 

\textit{The above semidefinite programming, \cref{opt:sdp}, can be solved in polynomial time (worst case)} \cite{sdp}. This concludes our polynomial time proof.

% \section{A Non-convex Subcase: Yet Solvable in Polynomial Time Complexity}
% \label{sec:non_con_polytime_subcase}
% \input{non_con_polytime_subcase}

\section{Experiments}
\label{sec:exp}
We evaluate our approach on a photo-realistic drone landing model using visual navigation, where the state variables are altitude, rate of change of altitude, pitch angle, and the rate of change of pitch angle, with the input being the elevator angle. The control task is to land the aircraft using LQR control as in \cite{sandeep_neurips}. Note that the state of drone obtained from the mathematical model is almost never accurate, which could be due to various uncertainties, such as wind, the motion of the landing pad (say in a water-body). Note that the difference in state, obtained from the mathematical model and reality, is encoded by the $C s_t$ part of the dynamics as in \cref{eq:dynamics}. To be able to estimate the true state of the drone, one can use various perception models. 
In the following set of experiments, we will evaluate our proposed solution to the model selection problem. That is, a choice of perception model to be invoked, at a given step, obtained using our proposed approach strikes an optimal balance between the control cost and the perception cost, whereas other benchmark policies fail.
To this end, we will perform the following two experiments to evaluate or proposed optimal solution to the model selection problem:
\begin{enumerate}
    \item Simulate the availability of a plethora of perception models that can estimate the true altitude of the aircraft, with varying compute cost and accuracy, and evaluate our optimal model selection algorithm (\cref{prob:model_selection_expectation}).
    \item Similar to the above case, instead of simulating the perception models, we use AirSim \cite{airsim} to gather dataset of drone landing to train EfficientNet models b0 to b7 \cite{effnet} to accurately estimate the altitude of the drone. Once we train the models, similar to above experimental setup, we use our proposed optimal model selection solution to land the drone by balancing control cost and perception cost.
\end{enumerate}

We recall that the main purpose of this experiment is to evaluate our proposed solution to check if it indeed strikes the optimal balance between control cost and perception cost, against the following benchmarks:
\begin{enumerate}
    \item \tnm{All Small}: The model with the least accuracy and compute cost is used at all time steps.
    \item \tnm{All Large}: The model with the highest accuracy and compute cost is used at all time steps.
    \item \tnm{Random}: At a given time step, a random model is chosen to perform the task.
    \item \tnm{Optimal}: Our proposed model selection policy.
    \item \tnm{Oracle}: A strategy that assumes the results of all the model is known a priori, even without invoking the model. Note that this is an infeasible policy, we use this policy just to benchmark our proposed policy.
\end{enumerate}

Using our model selection policy, we want to reduce the total control cost for landing the drone with minimal perception cost. 
%Intuitively speaking, we observed in our experiments that a high control cost leads to a hasty, jerky landing, whereas a low control cost landing results in rather slow, comfortable landing. Our model selection strategy, on the other hand, results in a faster landing of the aircraft without compromising on the passenger comfort (\textit{i.e} no jerky landing). 

\begin{figure}[t]
\vskip 0.22in
\begin{center}
{\includegraphics[width=1.0\columnwidth]{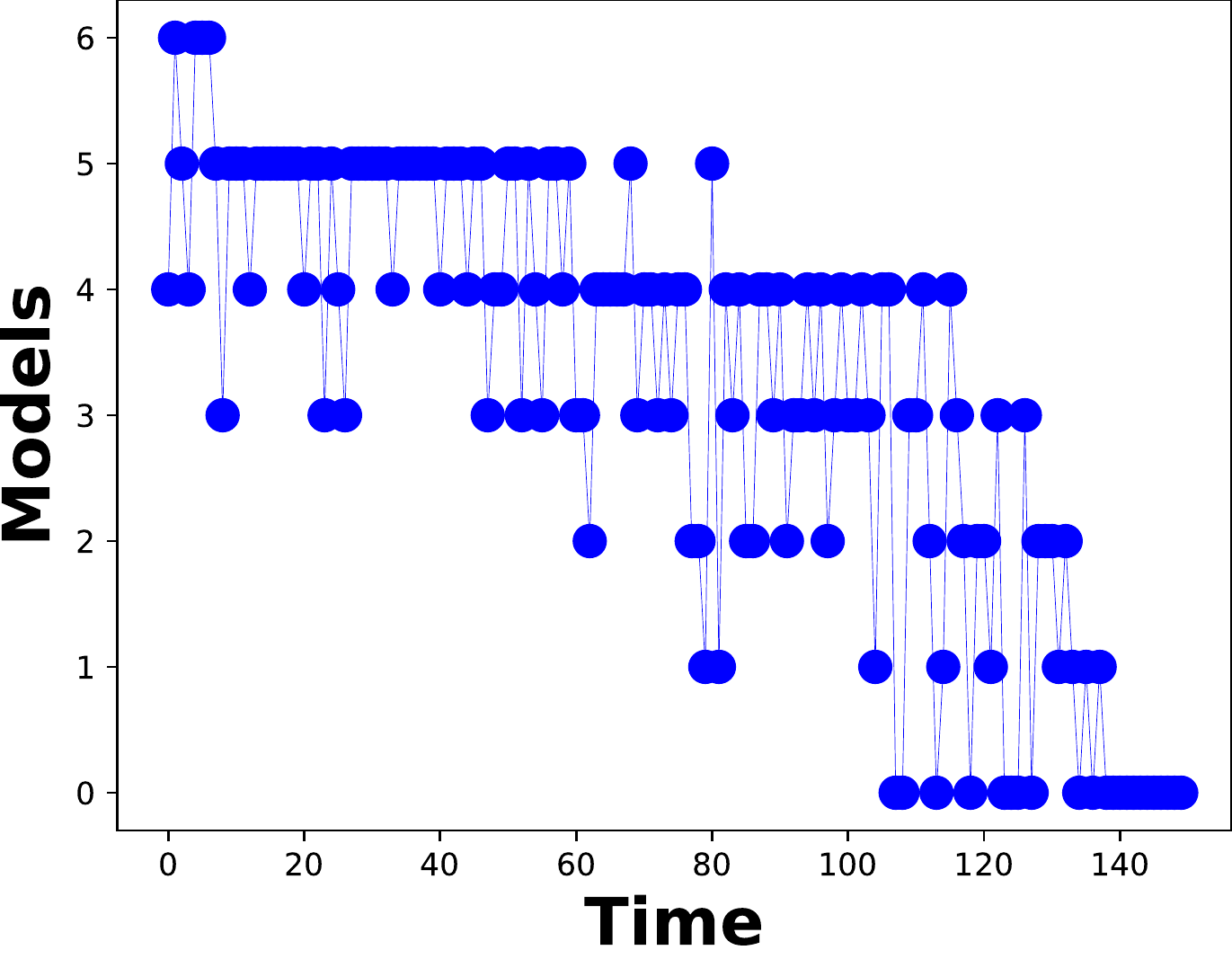}}
\end{center}
\caption{\textbf{Model Selection Sequence.} The $x$ and the $y$ axis represents time steps and the perception models invoked respectively. That is, this plot shows the perception models to be invoked at a every time step, up-to 150 time steps.}
\label{fig:simulation_ms}
\end{figure}

\begin{figure*}
    \begin{subfigure}{.48\textwidth}
        \centering
        \includegraphics[width=\linewidth]{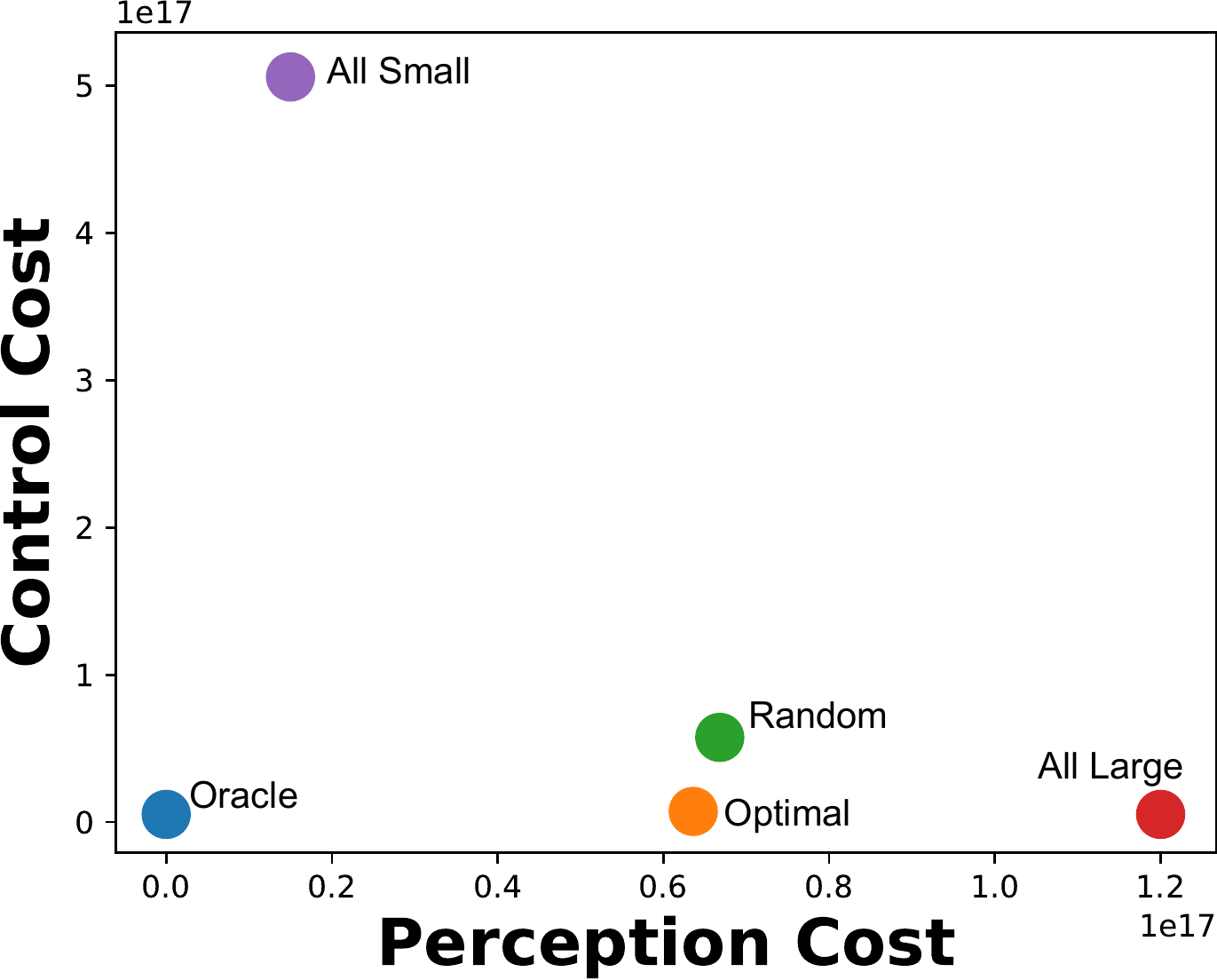}
        \vspace{0.00em}
        \caption{\textbf{Perception Model Cost vs. Control Cost.} The control cost and perception model cost for all the policies are shown. Clearly, our proposed policy (\tnm{Optimal}) achieves an optimal balance between control cost and perception model cost compared to other policies.}
        \label{fig:simulation_CvL}
    \end{subfigure}
    \hfill
    \begin{subfigure}{.48\textwidth}
        \centering
        \includegraphics[width=\linewidth]{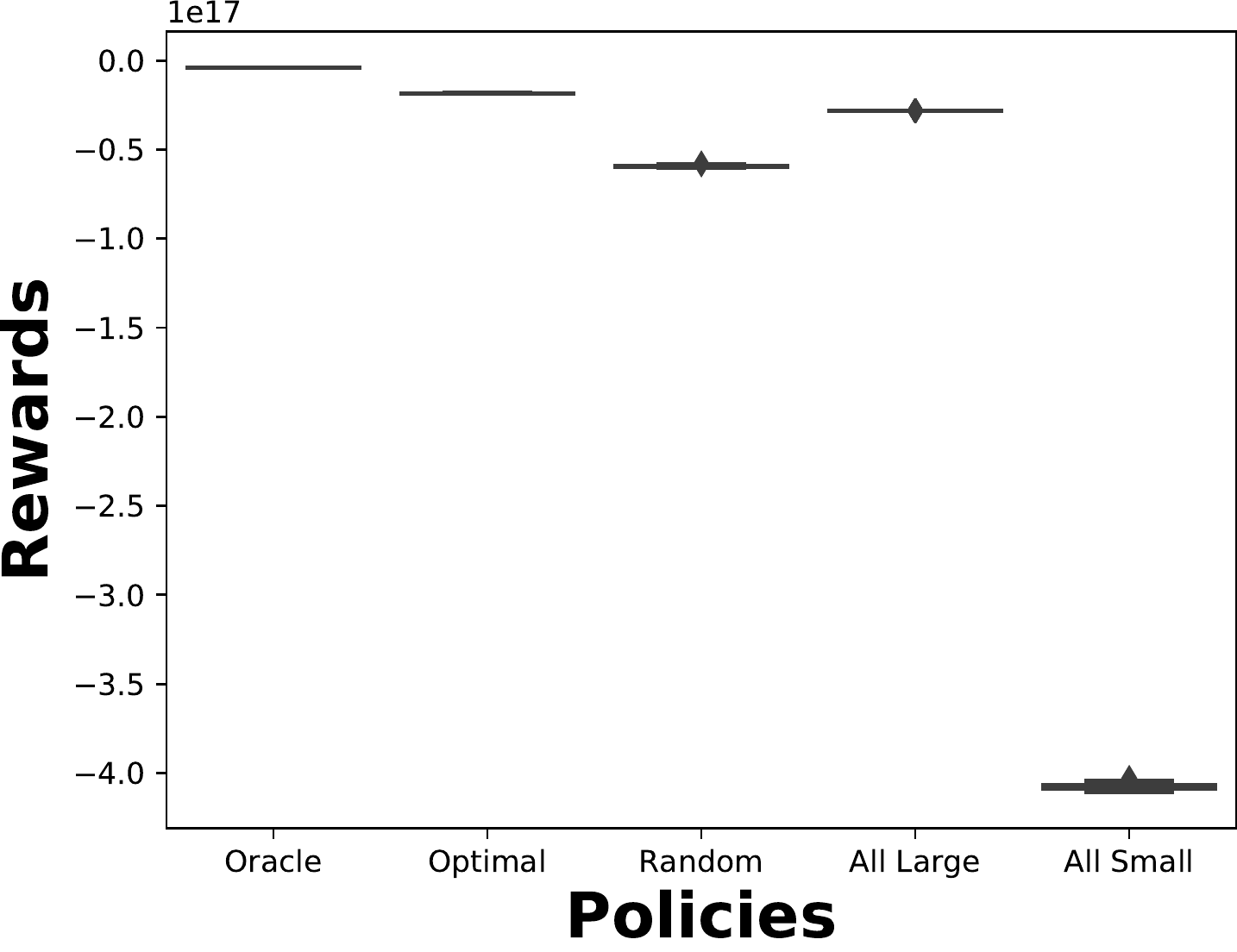}
        %\vspace{-1.4em}
        \caption{\textbf{Rewards.} Total reward obtained by various policies. Clearly, our proposed strategy (\tnm{Optimal}) obtains highest reward, and very close to the infeasible \tnm{Oracle} policy.}
        \label{fig:simulation_rwd}
    \end{subfigure}
    \vfill
    \vspace{4em}
    \begin{subfigure}{.48\textwidth}
        \centering
        \includegraphics[width=\linewidth]{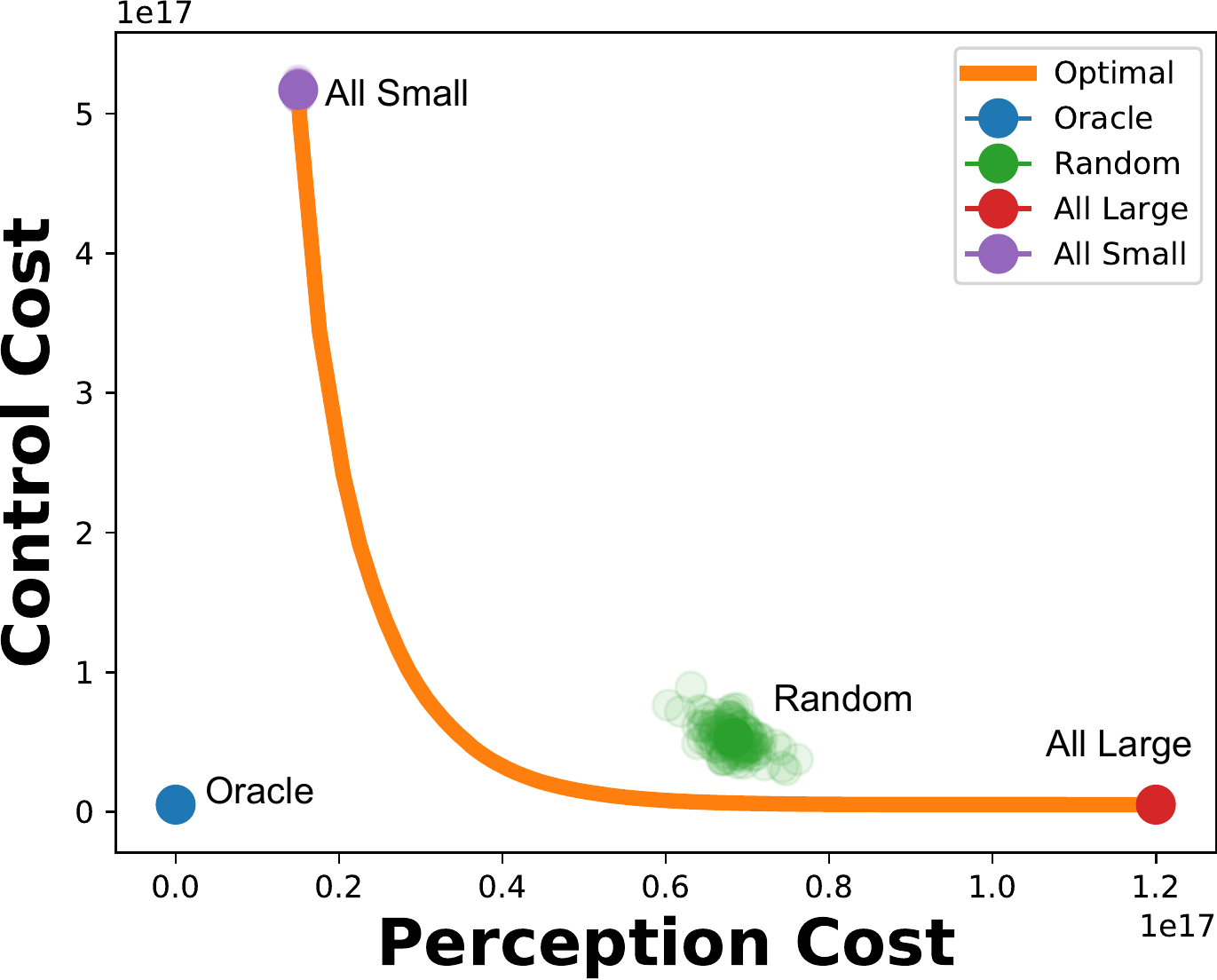}
        %\vspace{-1.4em}
        \caption{\textbf{Pareto.} This plot shows how the behavior of our optimal policy changes with varying values of $\alpha$ and $\beta$. Recall, higher values of $\alpha$ implies the model selection sequence should prioritize control cost over perception and model cost. Whereas higher values of $\beta$ implies the model selection sequence should prioritize perception model cost over control cost.}
        \label{fig:simulation_pareto}
    \end{subfigure}
    \hfill
     \begin{subfigure}{.48\textwidth}
        \centering
        \includegraphics[width=\linewidth]{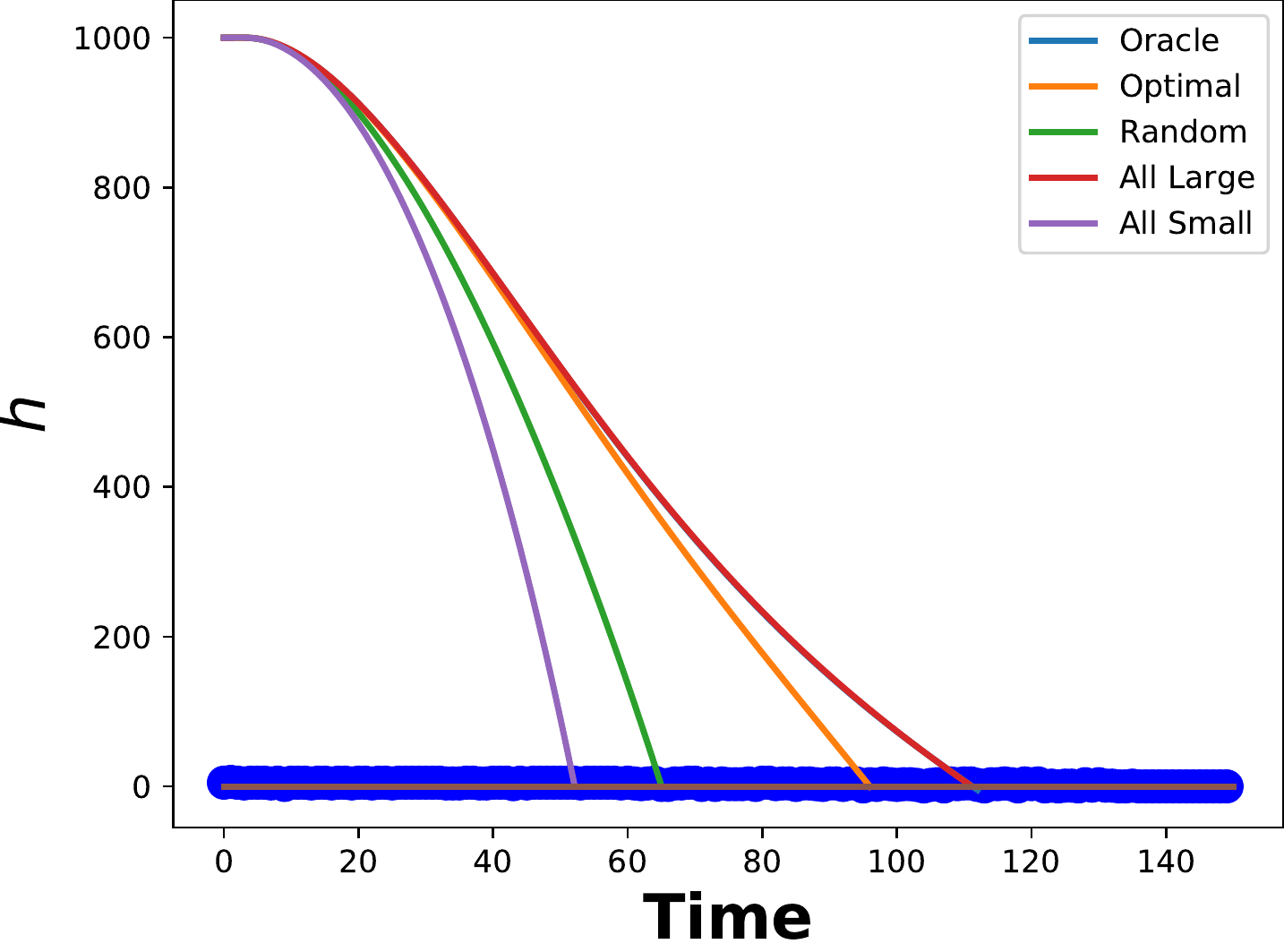}
        \vspace{0em}
        \caption{\textbf{Landing Trajectories.} Plot showing the change altitude happening, for landing the aircraft, with time for various policies.}
        \label{fig:simulation_traj_h}
    \end{subfigure}
    %\caption{Computed mean values of $\devub$ (blue) and deviations (cyan) with varying $c$.}
    \caption{Results evaluating our proposed model selection strategy (\tnm{Optimal}) on an aircraft landing simulation.}
    \label{fig:allResultSimulation}
\end{figure*}

\begin{figure}[t]
\begin{center}
{\includegraphics[width=1.0\columnwidth]{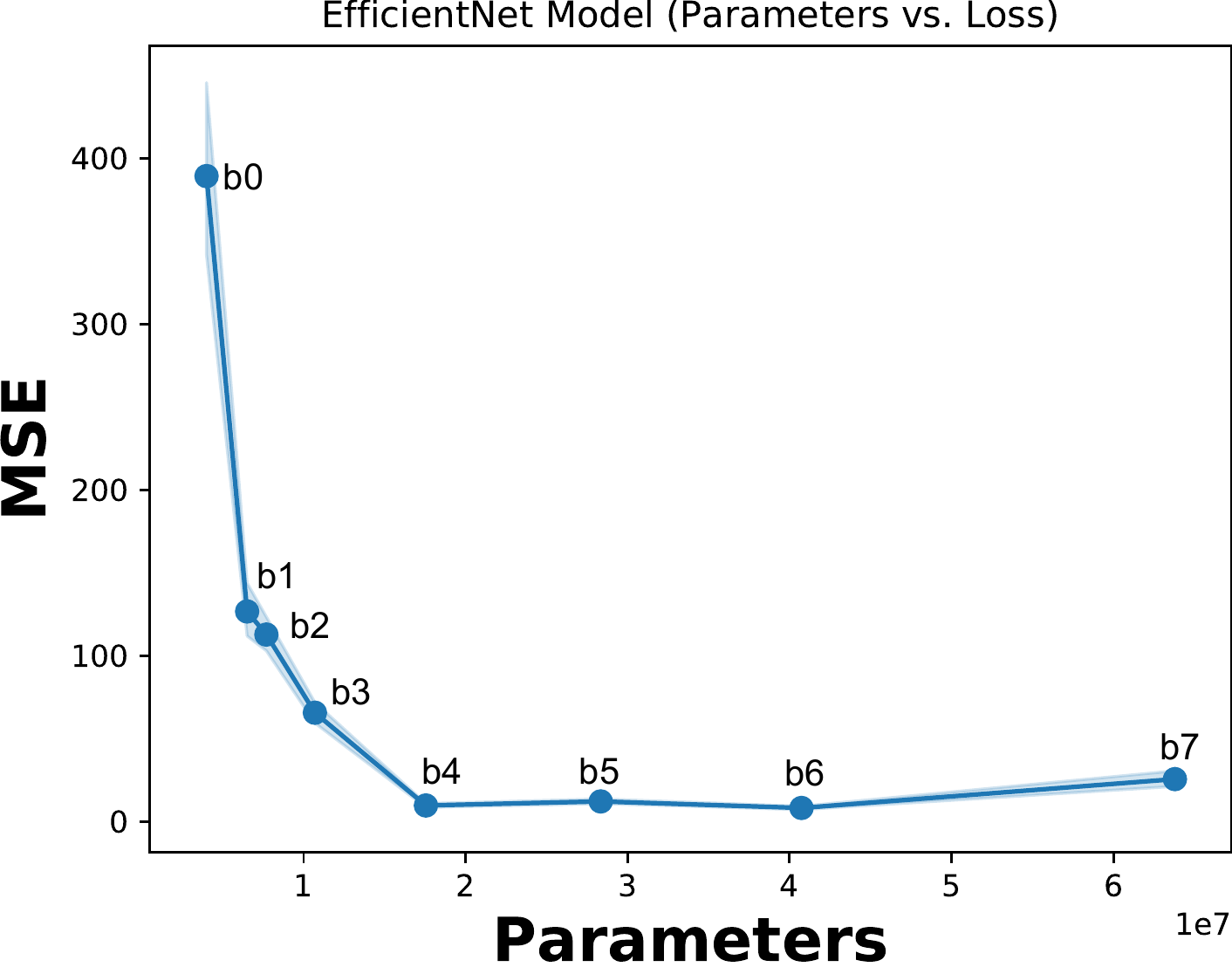}}
\end{center}
\caption{\textbf{Accuracy of All the Trained EfficientNet Models.} The $x$ and the $y$ axis represents number of parameters in the DNN and the MSE accuracy respectively. This plot shows the average accuracy and 95\% confidence interval of all the trained EfficientNet models.}
\label{fig:all_mod_acc}
\end{figure}

\begin{figure*}
    \begin{subfigure}{.48\textwidth}
        \centering
        \includegraphics[width=\linewidth]{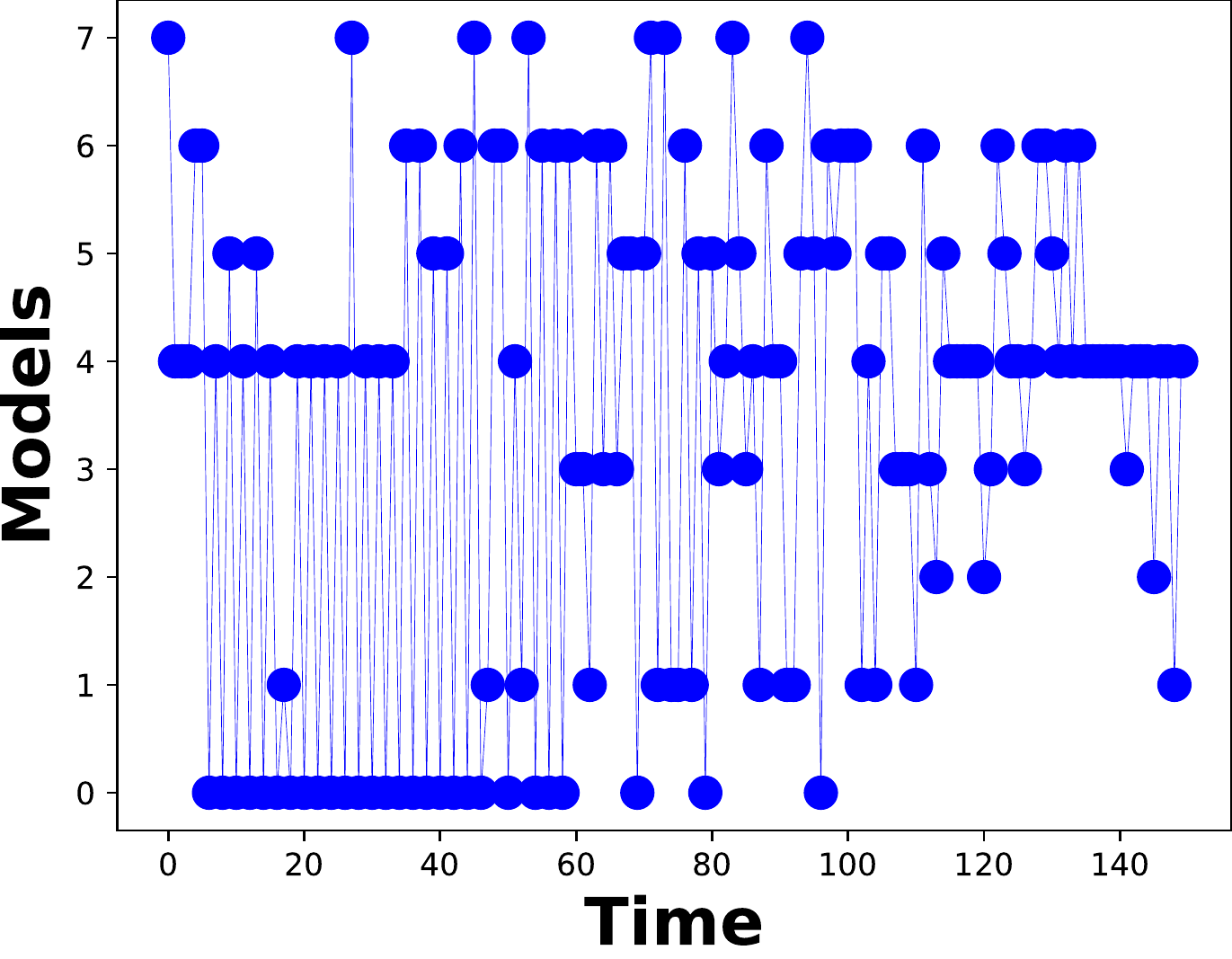}
        \vspace{0.00em}
        \caption{\textbf{Model Selection Sequence.} The $x$ and the $y$ axis represents time steps and the perception models invoked respectively. That is, this plot shows the perception models to be invoked at a every time step, up-to 150 time steps.}
        \label{fig:airsim_ms}
    \end{subfigure}
    \hfill
    \begin{subfigure}{.48\textwidth}
        \centering
        \includegraphics[width=\linewidth]{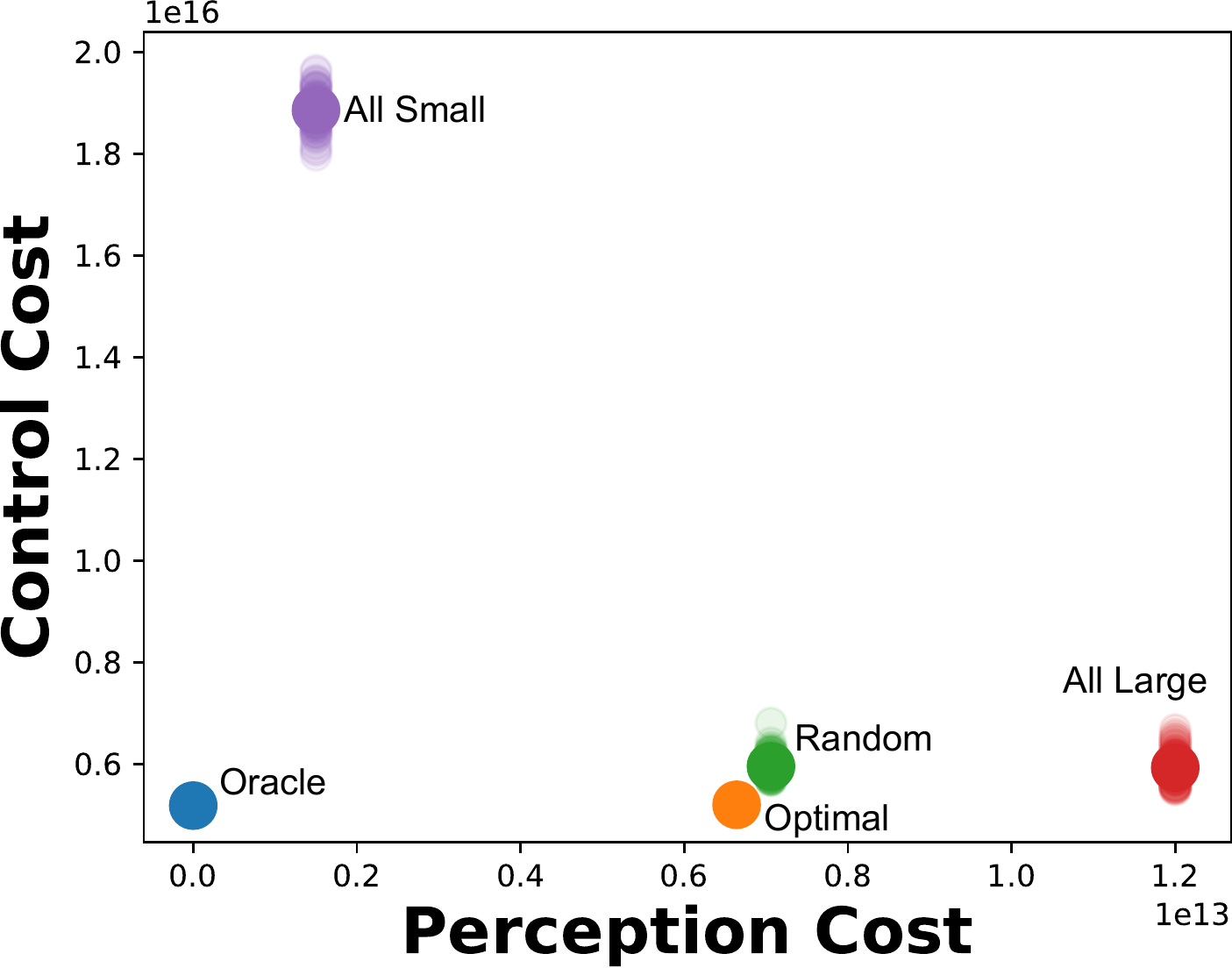}
        %\vspace{-1.4em}
        \caption{\textbf{Perception Model Cost vs. Control Cost.} The control cost and perception model cost for all the policies are shown. Clearly, our proposed policy (\tnm{Optimal}) achieves an optimal balance between control cost and perception model cost compared to other policies.}
        \label{fig:airsim_CvL}
    \end{subfigure}
    \vfill
    \vspace{4em}
    \begin{subfigure}{.48\textwidth}
        \centering
        \includegraphics[width=\linewidth]{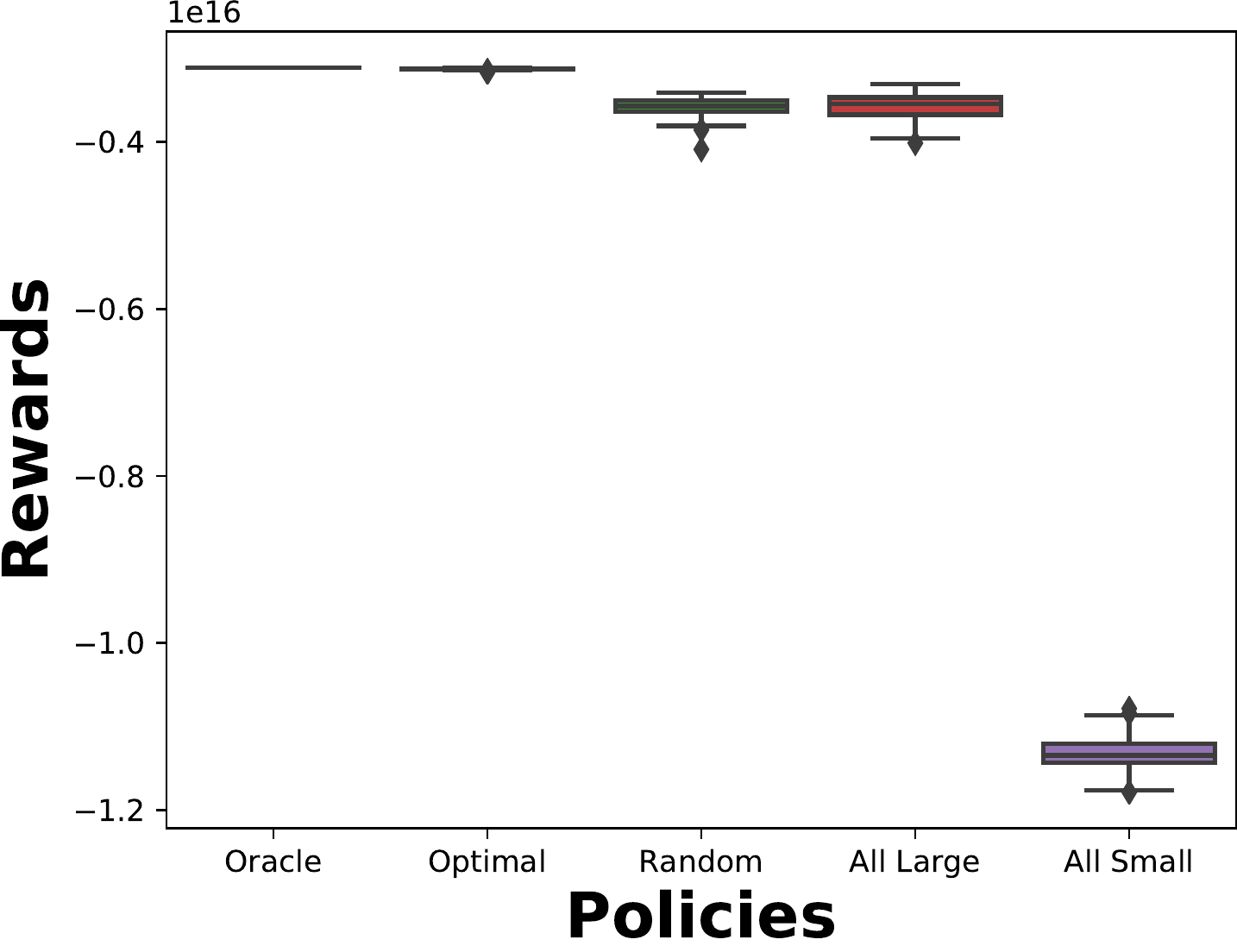}
        %\vspace{-1.4em}
        \caption{\textbf{Rewards.} Total reward obtained by various policies. Clearly, our proposed strategy (\tnm{Optimal}) obtains highest reward, and very close to the infeasible \tnm{Oracle} policy. 
        }
        \label{fig:airsim_rwd}
    \end{subfigure}
    \hfill
     \begin{subfigure}{.48\textwidth}
        \centering
        \includegraphics[width=\linewidth]{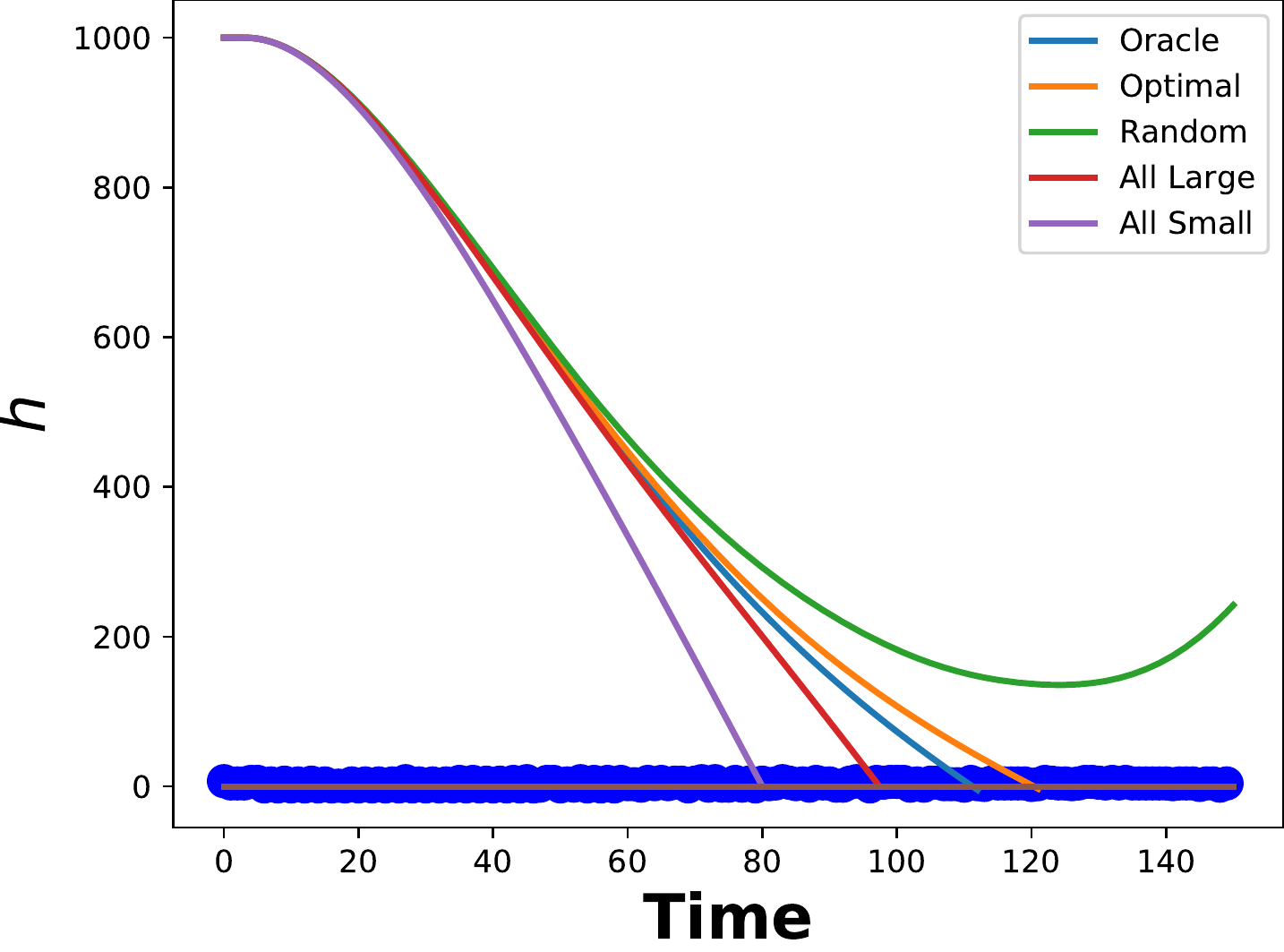}
        \vspace{0em}
        \caption{\textbf{Landing Trajectories.} Plot showing the change altitude happening, for landing the aircraft, with time for various policies.}
        \label{fig:airsim_traj_h}
    \end{subfigure}
    %\caption{Computed mean values of $\devub$ (blue) and deviations (cyan) with varying $c$.}
    \caption{Results evaluating our proposed model selection strategy (\tnm{Optimal}) on an aircraft landing simulation in AirSim.}
    \label{fig:airsim_ms_results}
\end{figure*}

\begin{figure*}[t]
\begin{center}
{\includegraphics[width=\linewidth]{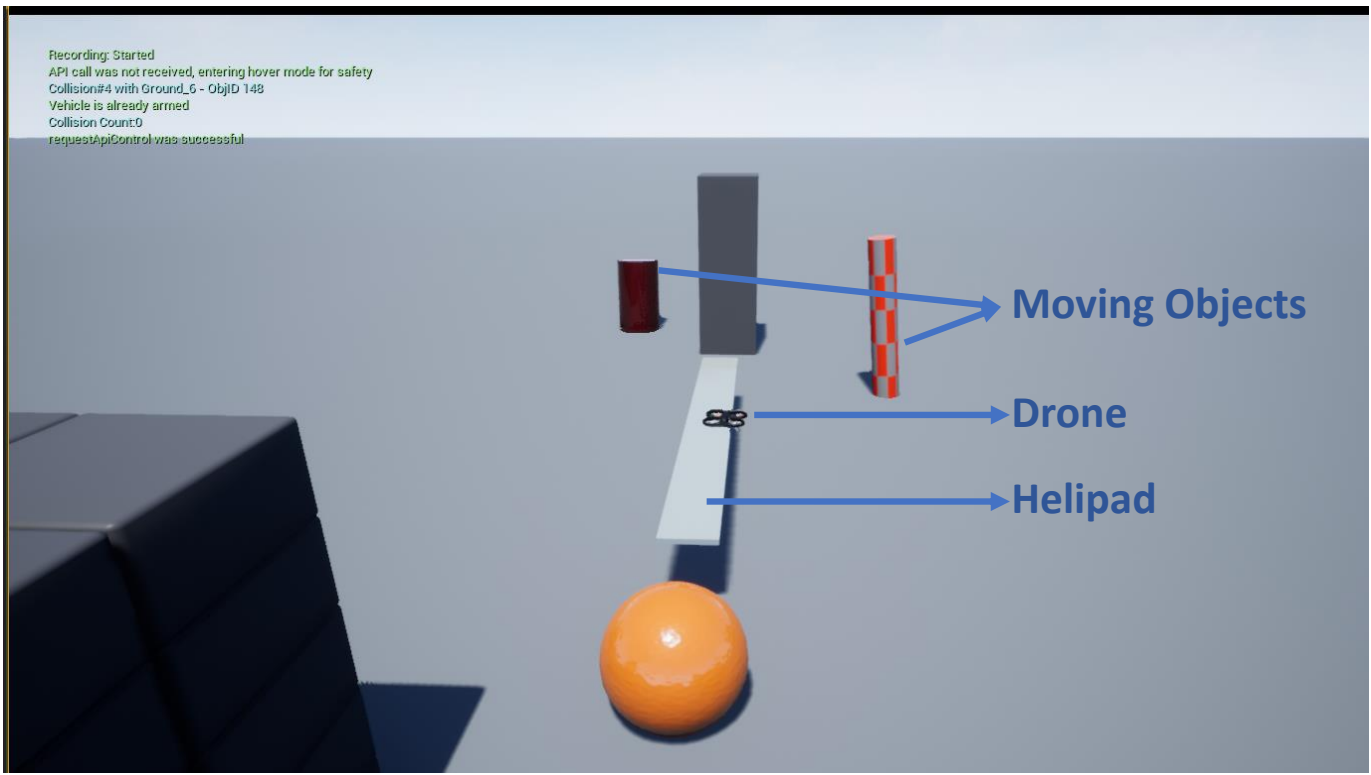}}
\end{center}
\caption{\textbf{AirSim Environment.} The environment emulates a drone landing on a helipad, where the environment also has a lot of other fixed and moving objects.}
\label{fig:airsim_env}
\end{figure*}

\subsection{Drone Landing Simulation}
\label{subsec:expSimulation}
In this subsection, we solve \cref{prob:model_selection_expectation} on an drone landing simulation. Note that our solution requires the mean and the variance of the perception models, that estimate the measurement error in the altitude, to be given as input. As a first step solution, we assume that such distributions are known to us. In particular, we assume 8 perception models are available to us, namely b0 - b7. We fix distributions, as in \cref{eq:perceptionModelDistro}, to some reasonable bounds (however, we train real perception models, and use its bounds, in the subsequent subsections). Once the required information is gathered, we solve \cref{prob:model_selection_expectation}, and obtain the model selection sequence given in \cref{fig:simulation_ms}, for 150 time steps, taking 2.88 s. Note that, in this instance, model b7 was never invoked. We used $\alpha=0.8$, $\beta=0.2$ and $\Upsilon=1e14$ (see \cref{eq:perceptionCostDistro}) to obtain these results. 

Now recall that this model selection sequence maximizes the reward in expectation. Therefore, to evaluate the obtained model selection sequence, we instantiate the assumed distribution, for all the perception models, for 100 trials. In other words, we assign concrete values to the measurement errors according to the assumed distribution. Once concrete values to the measurement errors are assigned, for all perception models, we evaluate our model selection sequence for the following.

\begin{itemize}
    \item Total perception cost and total control cost for all the polices are given in \cref{fig:simulation_CvL}. We clearly observe that our model selection strategy (\tnm{Optimal}) outperforms all other policies, achieving low control and perception cost. In particular, using our policy one can achieve 567.18\% lower control cost in 80.72\% less perception cost than the \tnm{Random} policy.
    \item Total reward gathered by all the polices is given in \cref{fig:simulation_rwd}. Our policy achieves 53.26\% better reward than the second best policy (\tnm{Cloud}), and 77\% far from the infeasible \tnm{Oracle} policy.
    \item \cref{fig:simulation_pareto} shows the perception cost vs. control cost obtained by various policies with varying values of $\alpha$ and $\beta$. Recall, a higher value of $\alpha$ implies the model selection sequence should prioritize control cost over perception cost. Whereas higher values of $\beta$ implies the model selection sequence should prioritize perception model cost over control cost.
    \item Finally, the trajectories of landing (change in altitude with time) obtained using various policies are given in \cref{fig:simulation_traj_h}.
\end{itemize}

\subsection{Photo-realistic Drone Landing Simulation in AirSim}
\label{subsec:expAirsim}
Similar to the previous set of experiments, except for assuming the distribution of the perception models, we simulate a photo-realistic drone landing sequence in AirSim using visual navigation. Contrary to our previous set of experiments (\cref{subsec:expAirsim}), we simulate the drone landing in AirSim and gather a dataset to train EfficientNet models b0 to b7. Our dataset contains the images captured from the camera attached to the drone and the altitude of the drone with respect to the helipad. Further, we compute the distribution, as in \cref{eq:perceptionModelDistro}, of the trained EfficientNet models using a different dataset (than the one used for training). Once the distribution is computed, we perform same steps as our previous experiments to solve \cref{prob:model_selection_expectation} and the evaluate the solution.

Next, we perform the following steps to compute the distribution of the perception models using AirSim.

\begin{enumerate}
    \item We setup an aircraft landing environment using a drone simulation. The drone is expected to land on a helipad. Moreover, the environment is cluttered, there are moving objects near the helipad that might make the perception difficult. The setup environment is shown in \cref{fig:airsim_env}.
    \item Using this setup, we gathered 2093 images as our training dataset, and 897 images as our testing dataset, by equipping the drone with a camera. See \cref{fig:airsim_dataset} for an example image.
    \item We then trained EfficientNet b0 to b7 on the training dataset to predict the difference in height of the drone to the helipad. See \cref{fig:b7_training_plot} for training accuracy plot of EfficientNet b7. The average accuracy of all the trained models, along with their 95\% confidence interval, is given in \cref{fig:all_mod_acc}.
    \item Once all the models are trained, we use the testing dataset to compute the distribution, as in \cref{eq:perceptionModelDistro}, for all the models. An example distribution of b7 can be found in \cref{fig:airsim_distro_b7}.
\end{enumerate}

Once the distributions are obtained, we perform similar steps as before. We now solve \cref{prob:model_selection_expectation}, and obtain the model selection sequence given in \cref{fig:airsim_ms}, for 150 time steps, taking 4.16 s. We used $\alpha=0.6$, $\beta=0.4$ and $\Upsilon=1e10$ to obtain these results.

Similar to our previous experiments we now evaluate our model selection sequence over 100 trials, and observe the following:

\begin{itemize}
    \item Total perception cost and total control cost for all the polices are given in \cref{fig:airsim_CvL}. We clearly observe that our model selection strategy (\tnm{Optimal}) outperforms all other policies, achieving low control and perception cost. In particular, using our policy one can achieve 38.04\% lower control cost in 79.1\% less perception model cost than the \tnm{Random} policy.
    \item Total reward gathered by all the polices is given in \cref{fig:airsim_rwd}. Our policy achieves 13.97\% better reward than the second best policy (\tnm{Cloud}), and just 0.3\% far from the infeasible \tnm{Oracle} policy.
    %\item \cref{fig:airsim_pareto} shows the perception model cost vs. control cost obtained by various policies with varying values of $\alpha$ and $\beta$. Recall, higher values of $\alpha$ implies the model selection sequence should prioritize control cost over perception and model cost. Whereas higher values of $\beta$ implies the model selection sequence should prioritize perception model cost over control cost.
    \item Finally, the trajectory of landing, change in altitude with time, obtained using various policies is given in \cref{fig:airsim_traj_h}.
\end{itemize}

\textit{Limitations}: In our formulation, the dynamics of the system is considered to be linear with additive uncertainties. In our future work, we wish to extend our techniques to non-linear dynamics with more intricate uncertainties. We note that such an extension is not straightforward: (i) One needs to extended the closed loop control cost formulation (ii) Encode the optimization problem in a manner that is easy for the solvers to solve efficiently. Further, we have currently focused on LQR control using some analytical results on LQR from a previous work. We wish to extend our technique to more control cost formulation, taking motivation from several robotic control applications. Advancing the multiple step decision problem to swarm robotics is also on our agenda: Consider a scenario, where instead of a single robot, a network of robots each having access to the same models are required to accomplish a control task. In a such a setup, how to allocate perception models to the robots such that optimality is ensured. Finally, we also intend to deploy the synthesized model selection policy on a real robots in a factory floor.

\section{Conclusion}
\label{sec:conclusion}
With the increasing trend in training perception model suits with latency accuracy trade-off, little attention has been paid on investigating which model to invoke for multi-step control tasks in robotics (known as the model selection problem). In this work, we provide a provably optimal solution to the model selection problem using only the information about the distribution of the models' output. The key insight from our work is that the variance of the perception models' output plays an important in multi-step decision making, while most available literature largely looks at just the mean accuracy. Simulating a perception based drone landing sequence in AirSim, our proposed technique achieved 38.04\% lower control cost with 79.1\% less perception time than other competing benchmarks. We believe the results reported in this paper will help in training perception network suites focusing on other metrics (such as variance), and not just the mean accuracy. Further, these results also provide a mathematically grounded solution to facilitate the use of available plethora of perception network to accomplish a control task.

%\section{Acknowledgements}
%ackn.

%\newpage
%\clearpage

%\bibliographystyle{plainnat}
\bibliographystyle{abbrv}
\bibliography{main.bib}

\newpage
%\clearpage

\appendix
\label{sec:appendix}
\subsection{Proof of Equivalence}
\label{appx:correctness}
In this section we provide a proof of \cref{thm:equivalence}
\begin{proof}
\cref{prob:model_selection_naive} is stated as the following minimization problem:
\begin{align}
&~&
\begin{aligned}
\min_{\choiceVec \in \mathbf{C}} \quad \totalCost{\choiceVec}.
\end{aligned} \label{eq:opt_proof_0}
\end{align}

The above equation can be equivalently restated as:
\begin{align}
&~&
\begin{aligned}
\min_{\choiceVec \in \mathbf{C}} \quad \alpha \cdot \controlCost{\choiceVec} + \beta \cdot \perceptionCost{\choiceVec}.
\end{aligned} \label{eq:opt_proof_1}
\end{align}

That is, $\choiceVec_{opt} \in \mathbf{C}$ minimizing the above optimization problem is a solution to \cref{prob:model_selection}. 

Next, we show that minimization problem in \cref{opt:model_selection_main} is equivalent to the minimization problem in \cref{eq:opt_proof_1}.

Consider the following equivalent formulations:
\begin{align}
&~&
\begin{aligned}
\min_{\choiceVec \in \mathbf{C}} \quad \alpha \cdot  \Big((\perVecChoice{\choiceVec} - \perVec)^{\top} \times \Psi \times (\perVecChoice{\choiceVec} - \perVec) \Big) +  \\ 
\beta \cdot \perceptionCost{\choiceVec}
\end{aligned} \\
&\iff&
\begin{aligned}
\min_{\choiceVec \in \mathbf{C}} \quad \alpha \cdot \Big( J^c\big(\mu(\hat{\mathbf{s}}),\perVec,x_0\big) - J^c\big(\inpVecOpt,\perVec,x_0\big) \Big) + \\
\beta \cdot \perceptionCost{\choiceVec}~~~~[\because  \cref{eq:control_cost_diff}]
\end{aligned} \\
&\iff&
\begin{aligned}
\min_{\choiceVec \in \mathbf{C}} \quad \alpha \cdot \Big(\controlCost{\choiceVec}  - J^c\big(\inpVecOpt,\perVec,x_0\big) \Big) + \\ 
\beta \cdot \perceptionCost{\choiceVec}
\end{aligned} \\
&\iff&
\begin{aligned}
\min_{\choiceVec \in \mathbf{C}} \quad \underbrace{\alpha \cdot \controlCost{\choiceVec} + \beta \cdot \perceptionCost{\choiceVec}}_{\cref{eq:opt_proof_1}} - \\ 
\underbrace{\alpha \cdot J^c\big(\inpVecOpt,\perVec,x_0\big)}_{\text{Oracle control cost}}. \label{eq:opt_proof_2}
\end{aligned} 
\end{align}

The term $\alpha \cdot J^c\big(\inpVecOpt,\perVec,x_0\big)$ encoding the oracle control cost is a constant term in the objective function of the above optimization problem, as it is independent of the optimization variable $\choiceVec$. Note that the objective function of the optimization formulation, in \cref{eq:opt_proof_2}, is obtained by subtracting the constant term $\alpha \cdot J^c\big(\inpVecOpt,\perVec,x_0\big)$ from the objective function of the optimization problem in \cref{eq:opt_proof_1}. Therefore, $\choiceVec_{opt} \in \mathbf{C}$ minimizing the objective function in \cref{eq:opt_proof_1} also minimizes the objective function in \cref{eq:opt_proof_2}.
\end{proof}

\subsection{Encoding with Binary Variables}
\label{appx:bc_equivalence}

In this section we show that $\choiceVec$ and $\binChoiceVec$ are equivalent.

Given a set of choices $\choiceVec=\big\{w_0, w_1, \cdots, w_{H-1}\big\}$, one can represent $\choiceVec$ using $\binChoiceVec=\{
    b^0_0, b^0_1, \dots, b^0_{W-1}, b^1_{0}, \dots b^1_{W-1}, \dots b^{H-1}_{W-1} 
    \}$, such that, $\forall_{0 \le t \le H-1} b^t_{w_t}=1$, and $\forall_{0 \le t \le H-1} \sum\limits_{i=0}^{W-1} b_i^t=1$.

\subsection{Expectation and Variance of the Difference Vector}
\label{appx:exp_diff}

In this section we provide the expectation and variance of ($\perVecChoice{c} - \perVec$)

\begin{align}
\label{eq:perDiffEncodingExp}
&\expectation{\perVecChoice{c} - \perVec}
=
\expectation{\mathcal{V}(\binChoiceVec)}
= \nonumber \\
&\begin{bmatrix}
\sum\limits_{i=0}^{W-1} b^0_i \cdot \expectation{\perModel{i,0} - s_0}\\
\vdots \\
\sum\limits_{i=0}^{W-1} b^t_i \cdot \expectation{\perModel{i,t} - s_t}\\
\vdots \\
\sum\limits_{i=0}^{W-1} b^{H-1}_i \cdot \expectation{\perModel{i,H-1} - s_{H-1}}
\end{bmatrix}.
\end{align}

\begin{align}
\label{eq:perDiffEncodingVar}
&\variance{\perVecChoice{c} - \perVec}
= 
\variance{\mathcal{V}(\binChoiceVec)}
= \nonumber \\
&\begin{bmatrix}
\sum\limits_{i=0}^{W-1} b^0_i \cdot \variance{\perModel{i,0} - s_0}\\
\vdots \\
\sum\limits_{i=0}^{W-1} b^t_i \cdot \variance{\perModel{i,t} - s_t}\\
\vdots \\
\sum\limits_{i=0}^{W-1} b^{H-1}_i \cdot \variance{\perModel{i,H-1} - s_{H-1}}
\end{bmatrix}.
\end{align}

\subsection{Minimizing Expectation}
\label{appx:minimizing_exp}

\begin{equation}
\label{eq:diffVec}
\perVecChoice{c} - \perVec =
\begin{bmatrix}
\per{0} - s_0 \\
\per{1} - s_1 \\
\vdots \\
\per{H-1} - s_{H-1}
\end{bmatrix}.
\end{equation}

Note that each element in the vector given in \cref{eq:diffVec} is a vector of $p$ dimension. Therefore, the vector $(\perVecChoice{c} - \perVec)$ can be rewritten as follows:

\begin{equation}
\label{eq:diffVecUnwrapped}
\perVecChoice{c} - \perVec =
\begin{bmatrix}
\per{0}[0] - s_0[0] \\
\vdots \\
\per{0}[p-1] - s_0[p-1] \\
\per{1}[0] - s_1[0] \\
\vdots \\
\per{H-1}[p-1] - s_{H-1}[p-1]
\end{bmatrix},
\end{equation}

where the $r$-th ($0 \le r \le p-1$) element of the vector $\per{t}$ is given as $\per{t}[r]$. As mentioned in \cref{sec:exp_solution_model_selection}, each element in the vector \cref{eq:diffVecUnwrapped} comes from a probability distribution; let the random variable corresponding to $\per{t}[r]-s_t[r]$ be denoted as $d_{pt+r}$, for given choice of model $w$. Therefore \cref{eq:diffVecUnwrapped} can be further rewritten as

\begin{align}
\perVecChoice{c} - \perVec 
&=&
\begin{bmatrix}
\per{0}[0] - s_0[0] \\
\vdots \\
\per{0}[p-1] - s_0[p-1] \\
\per{1}[0] - s_1[0] \\
\vdots \\
\per{H-1}[p-1] - s_{H-1}[p-1]
\end{bmatrix} \\
&=&
\begin{bmatrix}
d_0 \\
d_1 \\
\vdots \\
d_{p \cdot H -1}
\end{bmatrix}
= \mathbf{d}.
\end{align}

\begin{assumption}
\label{ass:indepRV}
Given any two random variable $d_i$, $d_j$, for $0 \le i,j \le p \cdot H -1$, they are independent.
\end{assumption}

\begin{align}
\expectation{\alpha \cdot  \Big((\perVecChoice{\choiceVec} - \perVec)^{\top} \times \Psi \times (\perVecChoice{\choiceVec} - \perVec) \Big) + \beta \cdot \perceptionCost{\choiceVec}} \\
= \expectation{\alpha \cdot  \Big(\mathbf{d}^{\top} \times \Psi \times \mathbf{d} \Big)} + \beta \cdot \perceptionCost{\choiceVec} \\
= \alpha \cdot \sum\limits_{j=0}^{p \cdot H -1} \expectation{\sum\limits_{i=0}^{p \cdot H -1} d_j \cdot d_i \cdot \Psi[i][j]} + \beta \cdot \perceptionCost{\choiceVec} \label{eq:expTmp1}
\end{align}

For any $j$ in the first summation, using \cref{ass:indepRV}, we have:

\begin{align}
\expectation{\sum\limits_{i=0}^{p \cdot H -1} d_j \cdot d_i \cdot \Psi[i][j]} = \nonumber \\
\sum\limits_{i=0}^{p \cdot H -1} \expectation{d_j} \expectation{d_i} + \variance{d_j} \cdot \Psi[j][j]. \label{eq:expTmp2}
\end{align}

Using \cref{eq:expTmp2}, we rewrite \cref{eq:expTmp1} as follows:

\begin{align}
\alpha \cdot \sum\limits_{j=0}^{p \cdot H -1} \expectation{\sum\limits_{i=0}^{p \cdot H -1} d_j \cdot d_i \cdot \Psi[i][j]} + \beta \cdot \perceptionCost{\choiceVec} \\
= \sum\limits_{j}\sum\limits_{i} \Psi[i][j] \cdot \expectation{d_j} \cdot \expectation{d_i} + \nonumber \\ \sum\limits_{i}\Psi[i][j] \cdot \variance{d_j} + \nonumber \\
 \beta \cdot \perceptionCost{\choiceVec} \\
 =
\alpha \cdot  \Big(\expectation{(\perVecChoice{\choiceVec} - \perVec)}^{\top} \times \Psi \times \expectation{(\perVecChoice{\choiceVec} - \perVec)} \Big) + \nonumber \\
\alpha \cdot \variance{(\perVecChoice{\choiceVec} - \perVec)} + \nonumber \\
 \beta \cdot \perceptionCost{\choiceVec}.
\end{align}

\subsection{Transformation of Choice Variables}
\label{appx:varTransform}
Let $\mathbf{C}^c$ be the set of all possible choices. Let $\choiceVec=\big\{c_0, c_1, \cdots, c_{H-1}\big\} \in \mathbf{C}^c$ be a set of choices up-to time step $H-1$, where $c_i \in \reals_{[0,1]}$.

\begin{align}
\label{eq:perDiffEncodingContAppx}
&\expectation{\perVecChoice{c} - \perVec}
=
\expectation{\mathcal{V}
(\choiceVec)} 
= \nonumber
\\
&\begin{bmatrix}
(1-c_0) \cdot \expectation{\mathcal{D}^0_0} + c_0 \cdot \expectation{\mathcal{D}^{W-1}_0}\\
\vdots \\
(1-c_t) \cdot \expectation{\mathcal{D}^0_t} + c_t \cdot \expectation{\mathcal{D}^{W-1}_0}\\
\vdots \\
(1-c_{H-1}) \cdot \expectation{\mathcal{D}^0_{H-1}} + c_{H-1} \cdot \expectation{\mathcal{D}^{W-1}_{H-1}}
\end{bmatrix}.
\end{align}

Note that for a given $t$, $\expectation{\mathcal{D}^0_t}$ and $\expectation{\mathcal{D}^{W-1}_t}$ are also a $p$ dimensional vector. Let the $r$-th element of these vectors be denoted as $\expectation{\mathcal{D}^{0}_t}[r]$ and $\expectation{\mathcal{D}^{W-1}_t}[r]$. Let $\expectation{\mathcal{D}^{0}_t}[r]=\Gamma^0_{pt+r}$ and $\expectation{\mathcal{D}^{W-1}_t}[r]=\Gamma^1_{pt+r}$. Now we can rewrite \cref{eq:perDiffEncodingContAppx} with $\choiceVec'$ as follows:

\begin{align}
\label{eq:perDiffEncodingContTransformAppx}
& \expectation{\perVecChoice{c} - \perVec}
=
\expectation{\mathcal{V}
(\choiceVec)} 
= \nonumber
\\
& \begin{bmatrix}
(1-c_0) \cdot \Gamma^0_0 + c_0 \cdot \Gamma^1_0\\
\vdots \\
(1-c_0) \cdot \Gamma^0_{p-1} + c_0 \cdot \Gamma^1_{p-1}\\
\vdots
\\
(1-c_{H-1}) \cdot \Gamma^0_{p(H-1)} + c_{H-1} \cdot \Gamma^1_{p(H-1)}\\
\vdots \\
(1-c_{H-1}) \cdot \Gamma^0_{pH-1} + c_{H-1} \cdot \Gamma^1_{pH-1}
\end{bmatrix}.
\end{align}

We further note that a choice scalar variable $c_t$ (Or, $1-c_t$) is multiplied with the vector $\Gamma_t^k$ in \cref{eq:perDiffEncodingContTransformAppx}. Similar to the above process, we can create copies of the scalar variable $c_t$ as follows:

\begin{align}
    c_t \cdot \Gamma^k_t 
    &=&
    \begin{bmatrix}
    c_t \cdot \Gamma^k_t & c_t \cdot  \Gamma^k_{t+1} & \cdots & c_t \cdot  \Gamma^k_{t+p-1}
    \end{bmatrix}^{\top} \\
    &=&
    \begin{bmatrix}
    c'_t \cdot \Gamma^k_t & c'_{t+1} \cdot  \Gamma^k_{t+1} & \cdots & c'_{t+p-1} \cdot  \Gamma^k_{t+p-1}
    \end{bmatrix}^{\top},
\end{align}
where $c_t = c'_t = \cdots = c'_{t+p-1}$. Using this process of unwrapping of vectors $\Gamma^k_t$ and $c_t$ to scalars, we rewrite \cref{eq:perDiffEncodingContTransformAppx} as follows:

\begin{align}
& \expectation{\perVecChoice{c} - \perVec}
=
\expectation{\mathcal{V}
(\choiceVec)} 
= \nonumber
\\
& \begin{bmatrix}
(1-c'_0) \cdot \Gamma^0_0 + c'_0 \cdot \Gamma^1_0\\
\vdots \\
(1-c'_{pt+r}) \cdot \Gamma^0_{pt+r} + c'_t \cdot \Gamma^1_{pt+r}\\
\vdots \\
(1-c'_{pH-1}) \cdot \Gamma^0_{pH-1} + c'_{pH-1} \cdot \Gamma^1_{pH-1}
\end{bmatrix},
\end{align}

where $\forall_{t=0}^{H-1}$ $c'_{p\cdot t}=\cdots=c'_{p\cdot t -1}$, and $\forall_t c'_t \in \reals_{[0,1]}$.

\subsection{Canonical Form}
\label{appx:canonical}

The objective function in \cref{eq:sdp_proof_4} formulation can be rewritten as:

\begin{align}
\alpha \cdot  \Big(\expectation{\mathcal{V}(\choiceVec')}^{\top} \times \Psi \times \expectation{\mathcal{V}(\choiceVec')} \Big) + \nonumber \\ 
\alpha \cdot \mathbb{V} \cdot \variance{\mathcal{V}(\choiceVec')} + \beta \cdot \perceptionCost{\choiceVec'} &=&
\\
\alpha \cdot   \Big(\expectation{\mathcal{V}(\choiceVec')}^{\top} \times \Psi \times \expectation{\mathcal{V}(\choiceVec')} \Big) + \nonumber
\\
\mathcal{R} \cdot \choiceVec' + \mathcal{K}_2 +\beta \cdot \Im\choiceVec' &=& \\
\Bigg( \underbrace{\sum\limits_{i=0}^{pH} \sum\limits_{j=0}^{pH} \expectation{\mathcal{V}(\choiceVec')}[i] \cdot \expectation{\mathcal{V}(\choiceVec')}[j] \cdot \alpha \cdot \Psi[i][j]}_{\text{c-term}} \Bigg) \nonumber \\
+ \mathcal{R} \cdot \choiceVec' + \mathcal{K}_2 +\beta \cdot \Im\choiceVec', \label{eq:sdp_proof_5}
\end{align}

where $\alpha \cdot \mathbb{V} \cdot \variance{\mathcal{V}(\choiceVec')} = \mathcal{R} \cdot \choiceVec' + \mathcal{K}_2$, and $\perceptionCost{\choiceVec}=\Im\choiceVec'$. Note that, the c-term in \cref{eq:sdp_proof_5} has quadratic, linear and constant terms in variables $\choiceVec'$. Therefore, we rewrite \cref{eq:sdp_proof_5} as follows:

\begin{align}
\Bigg( \sum\limits_{i=0}^{pH} \sum\limits_{j=0}^{pH} c'_i \cdot c'_j \cdot \alpha \cdot \big(\Gamma^0_i\Gamma^0_j+\Gamma^0_i\Gamma^1_j+\Gamma^1_i\Gamma^0_j+\Gamma^1_i\Gamma^1_j\big) \cdot \Psi[i][j]\Bigg) \nonumber
\\
+ \mathcal{L} \choiceVec' + \mathcal{K} + \mathcal{R} \cdot \choiceVec' + \mathcal{K}_2 +\beta \cdot \Im\choiceVec', \label{eq:sdp_proof_6}
\end{align}

where $\mathcal{L} \choiceVec'$ is the linear term (in variables $\choiceVec'$) after evaluating the c-term in \cref{eq:sdp_proof_5}, and $\mathcal{K}$ is the constant term. Note that we only evaluated the quadratic terms (and not the linear and constant term) because the complexity of a quadratic programming is dependent on the quadratic term \cite{sdp,qp_complexity_2,qp_complexity}.

\subsection{The Quadratic Matrix is a Positive Semidefinite}
\label{appx:psi}
Let, $\Phi \in \mathbb{R}^{p\cdot H \times p\cdot H}$ be as follows:
\begin{equation}
    \Phi[i][j]=\Gamma^0_i\Gamma^0_j+\Gamma^0_i\Gamma^1_j+\Gamma^1_i\Gamma^0_j+\Gamma^1_i\Gamma^1_j.
\end{equation}

Therefore, $\Psi' = \Phi \circ \Psi$, where $A \circ B$, denotes the Hadamard product of two matrices $A$ and $B$.

Let $\delta= mean_{i,j} \big\{ \Phi[i][j]\big\}$. We define a matrix, $\Delta \in \mathbb{R}^{p \cdot H \times p \cdot H}$, of same element, \textit{i.e.}, $\forall_{i,j} \Delta[i][i]=\delta$. Therefore, we write $\Phi$ as follows:

\begin{equation}
    \Phi =
    \underbrace{\tilde{\epsilon} I + \Delta}_{\Phi_c} + 
    \underbrace{
    \begin{bmatrix}
    \epsilon_{0,0}-\tilde{\epsilon} &  \epsilon_{1,1} & \cdots & \epsilon_{0,p \cdot H - 1} \\
    \epsilon_{1,0} & \epsilon_{1,1}-\tilde{\epsilon} & \cdots & \epsilon_{1,p \cdot H - 1} \\
    \vdots & \vdots & \ddots & \vdots \\
    \epsilon_{p \cdot H-1,0} & \epsilon_{p \cdot H-1,1} & \cdots & \epsilon_{p \cdot H -1,p \cdot H -1} - \tilde{\epsilon} 
    \end{bmatrix}}_{\Phi_{\delta}},
    \label{eq:phi}
\end{equation}

where $I \in \mathbb{R}^{p \cdot H\times p \cdot H}$ is an identity matrix, $\tilde{\epsilon} > 0$, $\epsilon_{i,j}= \big(\Gamma^0_i\Gamma^0_j+\Gamma'^0_i\Gamma^1_j+\Gamma^1_i\Gamma'^0_j+\Gamma^1_i\Gamma^1_j\big) - \delta$.

Note that $\Delta$ is singular but $\Phi_c$ is non-singular; this being the main reason to include the term $\tilde{\epsilon} I$ in \cref{eq:phi}, as it will be used in our further proofs.

We assume the following on the perception model:
\begin{assumption}
\label{ass:pt}
\begin{equation}
    \frac{||\Phi_\delta||_2}{||\Phi_c||_2} \le \frac{1}{\kappa(\Phi_c)} \label{eq:ass}
\end{equation}
Where $\kappa(\Phi_c)$ is defined as the condition number of a matrix, $\kappa(\Phi_c) = ||\Phi_c||_2 \cdot ||\Phi_c^{-1}||_2$.
\end{assumption}
If the assumption in \cref{ass:pt} is satisfied, we get $\Phi$ to be a positive semi-definite matrix, from perturbation theory. 

\subsection{Casting to a Semidefinite Program}
\label{appx:sdp}

The optimization formulation given in \cref{opt:canonical} can be rewritten as:

\begin{equation}
\label{eq:sdp_form_1}
\begin{aligned}
\min_{\choiceVec',\theta} \quad & \theta \\
\textrm{s.t.} \quad & \forall_{0 \le t \le H-1} \forall_{1 \le r \le p-1}~~~~~ W_{t,r}^{\top} \choiceVec' = 0 \\
\quad & \forall_{0 \le t \le H-1} ~~~ 0 \le E_t^{\top} \choiceVec' \le 1  \\
\quad & \choiceVec'^{\top} \Psi' \choiceVec' + (\mathcal{L} + \mathcal{R} + \beta \cdot \Im) \choiceVec' + \mathcal{K} + \mathcal{K}_2 - \theta \le 0.
\end{aligned}
\end{equation}

We further rewrite the above optimization as follows:
\begin{equation}
\label{eq:sdp_form_2}
\begin{aligned}
\min_{\choiceVec',\theta} \quad & \theta \\
\textrm{s.t.} \quad & \forall_{0 \le t \le H-1} \forall_{1 \le r \le p-1}~~~~~ W_{t,r}^{\top} \choiceVec' \le 0\\
\quad & \forall_{0 \le t \le H-1} \forall_{1 \le r \le p-1}~~~~~ -W_{t,r}^{\top} \choiceVec' \le 0,\\
\quad & \forall_{0 \le t \le H-1} ~~~ E_t^{\top} \choiceVec' -1 \le 0 \\ 
\quad & \forall_{0 \le t \le H-1} ~~~ -E_t^{\top} \choiceVec' \le 0 \\
\quad & \choiceVec'^{\top} \Psi' \choiceVec' + (\mathcal{L} + \mathcal{R} + \beta \cdot \Im) \choiceVec' + \mathcal{K} + \mathcal{K}_2 - \theta \le 0. \\
\end{aligned}
\end{equation}

Let $\Psi'=M^\top M$ [$\because$ \cref{thm:psi}].

Next, we express the optimization formulation in \cref{eq:sdp_form_2} as a standard semidefinite programming \cite{sdp_lecture_notes}.

\begin{equation}
\label{optAppx:sdp}
\begin{aligned}
\min_{\choiceVec',\theta} \quad & \theta \\
\textrm{s.t.} \quad & \forall_{0 \le t \le H-1} \forall_{1 \le r \le p-1}~~~~~ \begin{bmatrix}
1 & 0 \\
0 & -W^{\top}_{t,r} \choiceVec'
\end{bmatrix} \succeq 0 \\
\quad & \forall_{0 \le t \le H-1} \forall_{1 \le r \le p-1}~~~~~\begin{bmatrix}
1 & 0 \\
0 & W^{\top}_{t,r} \choiceVec'
\end{bmatrix} \succeq 0 \\
\quad & \forall_{0 \le i \le pH}~~ \begin{bmatrix}
1 & 0 \\
0 & -1 - E^{\top}_{t} \choiceVec'
\end{bmatrix} \succeq 0
\\
\quad & \forall_{0 \le i \le pH}~~ \begin{bmatrix}
I & 0 \\
0 & E^{\top}_{t} \choiceVec'
\end{bmatrix} \succeq 0 \\
\quad & \begin{bmatrix}
I & M \choiceVec' \\
\choiceVec'^{\top} M^{\top}& -(\mathcal{K}+ \mathcal{K}_2) - (\mathcal{L}+ \mathcal{R} +\beta \cdot \Im) \choiceVec' + \theta
\end{bmatrix} \succeq 0,
\end{aligned}
\end{equation}

where, $R \succeq 0$ means $R$ is a positive semidefinite matrix.

\end{document}